\documentclass{article} 

\usepackage{ssArxiv}
\usepackage[utf8]{inputenc}
\usepackage[title]{appendix}
\usepackage[dvipsnames]{xcolor}
\usepackage[
  colorlinks=true,
  citecolor=NavyBlue,
  linkcolor=Fuchsia,
  urlcolor=BrickRed
]{hyperref}

\usepackage{amsmath, amssymb}

\usepackage{alltt}
\usepackage{subcaption}
\usepackage{wrapfig}
\usepackage{booktabs}
\usepackage{bookmark}
\usepackage{nicefrac}
\usepackage{pifont}
\usepackage{tcolorbox}

\usepackage{tikz}

\usepackage{url}
\usepackage{enumitem}
\setlist[itemize]{
  noitemsep, topsep=0pt, label=$\blacktriangleright$, leftmargin=*
}
\usepackage{xspace}
\usepackage{framed}

\newcommand{\fstar}{f^\star}
\newcommand{\fhat}{\hat{f}}
\newcommand{\fbar}{\bar{f}}
\newcommand{\ftilde}{\tilde{f}}
\newcommand{\gin}{\mathrm{g}}
\newcommand{\ginhat}{\hat{\gin}}
\newcommand{\gintilde}{\tilde{\gin}}

\newcommand{\F}{\mathcal{F}}

\newcommand{\E}{\mathbb{E}}
\newcommand{\Real}{\mathbb{R}}

\newcommand{\er}{\mathcal{E}}
\newcommand{\eapp}{\mathcal{E}_{\text{{\footnotesize {\tt app}}}}}
\newcommand{\eest}{\mathcal{E}_{\text{{\footnotesize {\tt est}}}}}

\newcommand{\eopt}{\mathcal{E}_{\text{{\footnotesize {\tt opt}}}}}
\newcommand{\emcm}{\mathcal{E}_{\text{{\footnotesize {\tt mcm}}}}} 
\newcommand{\ehin}{\mathcal{E}_{\text{{\footnotesize {\tt hin}}}}} 

\newcommand{\rhoin}{\rho_{\text{{\footnotesize \tt in}}}}
\newcommand{\rhoout}{\rho_{\text{{\footnotesize \tt out}}}}

\DeclareMathOperator*{\argmin}{arg\,min}

\newcommand{\ua}{\textcolor{MidnightBlue}{$\boldsymbol{\uparrow}$}}
\newcommand{\da}{\textcolor{BurntOrange}{$\boldsymbol{\downarrow}$}}

\newcommand*\circled[1]{\tikz[baseline=(char.base)]{
    \node[shape=circle,draw,inner sep=1.25pt] (char) {#1};}}


\usepackage{amsthm}

\newtheorem{theorem}{Theorem}[section]

\newtheorem{heuristic}{Heuristic}[section]

\usepackage{natbib}


\title{%
  Toward Theoretical Guidance for Two Common Questions in 
  Practical Cross-Validation based Hyperparameter Selection%
}
\author{%
\name Parikshit Ram \email{p.ram@acm.org} \\
\addr{IBM Thomas J. Watson Research Center, Yorktown Heights, NY, USA}\\
\name Alexander G. Gray \email{alexander.gray@ibm.com}\\
\addr{IBM Thomas J. Watson Research Center, Yorktown Heights, NY, USA}\\
\name Horst C. Samulowitz \email{samulowitz@us.ibm.com}\\
\addr{IBM Thomas J. Watson Research Center, Yorktown Heights, NY, USA}\\
\name Gregory Bramble \email{bramble@us.ibm.com}\\
\addr{IBM Thomas J. Watson Research Center, Yorktown Heights, NY, USA}\\
}

\begin{document}

\maketitle

\begin{abstract}
We show, to our knowledge, the first theoretical treatments of two common
questions in cross-validation based hyperparameter selection: \circled{1}~After
selecting the best hyperparameter using a held-out set, we train the final model
using {\em all} of the training data -- since this may or may not improve future
generalization error, should one do this?  \circled{2}~During optimization such
as via SGD (stochastic gradient descent), we must set the optimization tolerance
$\rho$ -- since it trades off predictive accuracy with computation cost, how
should one set it?  Toward these problems, we introduce the {\em hold-in risk}
(the error due to not using the whole training data), and the {\em model class
  mis-specification risk} (the error due to having chosen the wrong model class)
in a theoretical view which is simple, general, and suggests heuristics that can
be used when faced with a dataset instance.  In proof-of-concept studies in
synthetic data where theoretical quantities can be controlled, we show that
these heuristics can, respectively, \circled{1}~always perform at least as well
as always performing retraining or never performing retraining,
\circled{2}~either improve performance or reduce computational overhead by
$2\times$ with no loss in predictive performance.
\end{abstract}
\section{Introduction}\label{sec:intro}
The learning process has various sources of errors. The first step in
(supervised) learning is the acquisition of (training) data. Given data, we
choose a model or function class $\F$ which corresponds to not just a {\em
  method} (such as Support Vector Machines, Generalized Linear Models, Neural
Networks, Decision trees) but their specific configuration governed by their
respective {\em hyperparameters} (such as regularization forms and penalties,
trees depth) -- these hyperparameters refer to anything that would affect the
predictive performance of the model learned from the training data. Given our
choice of the function class, the learning process searches for the function
that (approximately) minimizes the empirical risk (or some surrogate of it which
better represents the true risk or is easier to optimize). We currently have an
understanding of the factors~\citep{vapnik2006estimation,
  devroye2013probabilistic, bottou2008tradeoffs} affecting the {\em excess risk}
of this chosen function -- (i)~the choice of the function class and its capacity
to model the data generating process, (ii)~the use of an empirical risk {\em
  estimate} instead of the true risk, and (iii)~the approximation in the
empirical risk minimization (ERM).

However, in practice, the learning process is not limited to these steps. A
significant part of the whole exercise is the choice of the function class $\F$
(method and its specifications). Usually, we consider a (possibly large) set of
function classes and select one of them based on the data-driven process of
\emph{model selection} or {\em hyperparameter selection}. This search can be
done via grid search (searching over a discretized grid of hyperparameter
values) or random search~\citep{bergstra2012random}. However, AutoML (automated
machine learning) has spurred a lot of research in the area of {\em
  hyperparameter optimization} or HPO~\citep{smac, shahriari2016taking,
  snoek2012practical, bergstra2011algorithms, bergstra2013making}. The
automation allows us to look at even larger sets of function classes for
improved performance while being significantly more efficient than grid search
and more accurate than random search. The problem of HPO has been extended from
machine learning (ML) model configurations to the design of complete ML
pipelines known as the Combined Algorithm Selection and HPO (or CASH) problem,
with various schemes that handle (i)~pipelines with fixed
architecture~\citep{smac, bergstra2011algorithms, rakotoarison2019automated,
  liu2020admm, kishimoto2022bandit}, (ii)~searching over the pipeline
architectures~\citep{katz2020exploring, marinescu2021searching},
(iii)~deployment and fairness constraints~\citep{liu2020admm, ram2020solving},
and (iv)~operating in the decentralized setting~\citep{zhou2021flora,
  zhou2022single}, leading to multiple open-source tools~\citep{autoweka1,
  autoweka2, feurer2015efficient, feurer2020auto, komer2014hyperopt,
  bergstra2015hyperopt, baudart2020lale, baudart2021pipeline,
  hirzel2022gradual}.

There has also been a significant amount of theoretical work on development of
data-dependent penalties for penalty-based model selection, resulting in
guarantees in the form of ``oracle inequalities'' -- the expected excess risk of
the selected model can be shown to be within a multiplicative and additive
factor of the best possible excess risk if an oracle provided us with the best
hyperparameter. This has been widely studied in (binary)
classification~\citep{boucheron2005theory}, (bounded) regression and density
estimation~\citep{massart2007concentration, arlot2010survey}. However, in
practice, penalty-based model selection is not used for data-driven
hyperparameter selection, and we resort to some form of {\em
  cross-validation}\footnotemark (CV). These universally applicable CV
techniques have been shown to be theoretically competitive to the penalty-based
schemes at the cost of having less data for the learning since some amount of
data is ``held-out'' from the training data for validation
purposes~\citep{boucheron2005theory, arlot2010survey}. We focus on these
universal CV based HPO.%
\footnotetext{Existing literature terms the single training/validation split as
  ``cross-validation'' (see for example \citep{kearns1996bound, blum1999beating})
  and when there are multiple folds, it is specifically termed as ``$k$-fold
  cross-validation''.}
\paragraph{Our contributions.}
While CV based model selection has been studied theoretically, there are various
questions in practical HPO, which have not been explored in literature. A common
practice is the learning of the final model on the selected hyperparameter with
all available data, reintroducing the held-out data in the training. This is
standard practice in many commercial ML tools since user data is too precious to
not include in the training of the deployed model. In this paper, we provide
answers for the following questions\footnote{We presented a preliminary version
  of this work at the AutoML@ICML'21 workshop~\citep{ram2021leveraging}.}:
\begin{tcolorbox}[before skip=0.05cm, after skip=0.05cm, boxsep=0.0cm, middle=0.1cm]
\hypertarget{q1a}{\textsf{[Q1a]}}~\emph{Theoretically, when does this practice
  help in terms of excess risk and by how much?}
\tcblower
\hypertarget{q1b}{\textsf{[Q1b]}}~\emph{Is there a practical data-driven way of
  deciding whether to do this or not?}
\end{tcolorbox}
A common practice is to approximate the ERM with a large tolerance during the
HPO and perform a more accurate ERM during the final model training on the
selected HP to reduce the overall computational costs. We study the following
related questions:
\begin{tcolorbox}[before skip=0.05cm, after skip=0.05cm, boxsep=0.0cm, middle=0.1cm]
\hypertarget{q2a}{\textsf{[Q2a]}}~\emph{How does this ERM approximation in
  hyperparameter selection affect the excess risk?}
\tcblower
\hypertarget{q2b}{\textsf{[Q2b]}}~\emph{Is there a theoretically informed
  practical way of setting these approximation levels during the hyperparameter
  selection to better control the excess risk vs computation tradeoff?}
\end{tcolorbox}
Note that these answers can be utilized both by humans and by automated data
science systems.
\paragraph{Practical motivations.}
In applications with large amount of data, the importance of question
\hyperlink{q1a}{\textsf{Q1a}} (and consequently \hyperlink{q1b}{\textsf{Q1b}})
might appear minimal (the computational aspect of questions
\hyperlink{q2a}{\textsf{Q2a}} and \hyperlink{q2b}{\textsf{Q2b}} make them
critical with large data). However, we believe that even in such cases, the
important signal can still be quite small, and the relevant information in the
held-out set can have large impact ({\em positive or negative}) on the final
deployed model. For example, in various data science applications in finance,
the class/target of interest is usually very small. In the
\href{https://www.kaggle.com/c/home-credit-default-risk}{Home Credit Default
  Risk Kaggle Classification Challenge}, the training data has only around
$8.5\%$ of some $3\times10^5$ training examples from the positive class, and in
the \href{https://www.kaggle.com/c/ClaimPredictionChallenge}{AllState Claim
  Prediction Kaggle Regression Challenge}, the training set has less than $1\%$
of some $13\times 10^6$ examples with nonzero regression targets; rest are zero.
Furthermore, there are situations where the population has groups and the
underrepresented minority groups are significantly smaller than the majority
group(s). In this case, the presence or absence of the held-out data can have
significant (again positive or negative) impact on the fairness and accuracy of
the deployed model since many of the fairness metrics quantify the parity in the
group specific predictive performance.
Finally, ``small'' differences in predictive performance can have significant
impact depending on problems and applications -- in data science leaderboards
such as Kaggle, minor changes in final performance can lead to significant
reordering of the leaderboard (although, relevance of such competitions and
results to practical problems remain an open question).
Such scenarios motivate us to study the question of {\em whether} the practice
of retraining after reintroducing the data held-out during HPO is helpful
(\hyperlink{q1a}{\textsf{Q1a}}).
\paragraph{Empirical motivation.}
As evidence of the lack of clarity on this {\em whether} question, we consider
HPO for LightGBM~\citep{ke2017lightgbm} on 40 OpenML~\citep{OpenML2013} data sets
(we detail the evaluation setup in \S \ref{sec:probset:emp}). Of the 400
different HPO problems we solve, retraining has (i)~no {\em significant} effect
(positive or negative) in 51/400 (12.75\%) cases, (ii)~has a positive effect
(improving test error) in 260/400 (65\%) cases, and (iii)~has a negative effect
in 89/400 (22.25\%) cases. This highlights that {\em there is no single
  universal correct answer to this question}.
Hence, we study this {\em whether} question rigorously and provide a
theoretically motivated data-driven heuristic as one answer.
\paragraph{Outline.}
We present our precise problem setting, and existing \& novel excess risk
decompositions with empirical support in \S \ref{sec:probset}. Then we present
and evaluate our theoretical results, tradeoffs and practical heuristics for HPO
in \S \ref{sec:newform}.
We position our contributions against existing literature in \S
\ref{sec:litreview}, and conclude in \S \ref{sec:conc}.%
\begin{figure*}[ht]
\centering
\begin{subfigure}{0.3\textwidth}
  \centering
  \begin{tikzpicture}[scale=0.9]
  \draw[line width=1pt] (2, 0) .. controls (0, 1) and (0, 3) .. (3, 4);
  \draw[line width=1pt] (3, 4) .. controls (4, 3) and (4, 1) .. (2, 0) node[pos=0.9, right] {$\F_\lambda$};
  \fill[black] (0, 4) circle (4pt) node[right] {\ \  $\fstar$};
  \fill[blue] (1.3, 2.0) circle (4pt) node[below] {$\fbar_{\lambda}$};
  \fill[brown] (2.7, 2.6) circle (4pt) node[above] {$\fhat_{n,\lambda}$};
  \fill[magenta] (3.0, 1.2) circle (4pt) node[left] {$\ftilde_{n,\lambda}$};
  \draw[<->, gray, thick] (0.1, 3.9) -- (2.9, 1.3) node[pos=0.3, above] {$\mathcal{E}$};
  \draw[<->, orange, dashed, thick] (0.1, 3.9) -- (1.2, 2.1) node[pos=0.5, left] {$\eapp$};
  \draw[<->, red!50, dashed, thick] (2.7, 2.45) -- (3, 1.3) node[pos=0.3, right] {$\eopt$};
  \draw[<->, violet, dashed, thick] (1.4,2.1) -- (2.6, 2.5) node[pos=0.6, above] {$\eest$};
  \end{tikzpicture}
    \caption{Original decomposition}
  \label{fig:ed-orig}
\end{subfigure}
~
\begin{subfigure}{0.3\textwidth}
  \centering
  \begin{tikzpicture}[scale=0.9]
  \fill[black] (0, 4) circle (4pt) node[right] {\ \ $\fstar$};
  \draw[line width=1pt] (1, 2.5) .. controls (0, 3) .. (1, 4);
  \draw[line width=1pt] (1, 4) .. controls (1.5, 3) .. (1, 2.5) node[pos=0.5, right] {$\F_{\bar{\lambda}}$};
  \fill[blue] (0.6, 3.0) circle (2pt);
  \fill[brown] (1, 3.5) circle (2pt);
  \fill[magenta] (1.1, 2.9) circle (2pt);

  \draw[line width=1pt] (0, 0) .. controls (0, 1) .. (1, 1.2);
  \draw[line width=1pt] (1, 1.2) .. controls (1.5, 0.7) .. (0, 0);
  \draw[line width=1pt] (2, 1.3) .. controls (1, 1.5) .. (2.2, 2.1);
  \draw[line width=1pt] (2.2, 2.1) .. controls (2.5, 2) .. (2, 1.3) node[pos=1, right] {$\F_{\hat\lambda}$};
  \fill[green] (1.7, 1.6) circle (2pt);
  \draw[<->, gray, thick] (0.1, 3.9) -- (1.65, 1.65) node[pos=0.8, left] {$\mathcal{E}$};
  \draw[<->, olive, dashed, thick] (1.2,2.7) -- (1.8, 1.9) node[pos=0.3, right] {$\emcm$};
  \draw[line width=1pt] (2.5,2.5) .. controls (2.9, 3.5) .. (3.5, 3.9);
  \draw[line width=1pt] (3.5, 3.9) .. controls (4.2, 3) .. (2.5,2.5);
  \draw[line width=1pt] (3, 0.6) .. controls (2.8, 1.8) .. (3.2, 2);
  \draw[line width=1pt] (3.2,2) .. controls (4,1) .. (3.0, 0.6);
  \end{tikzpicture}
  \caption{Model mis-specification error}
  \label{fig:mcm}
\end{subfigure}
~
\begin{subfigure}{0.3\textwidth}
  \centering
  \begin{tikzpicture}[scale=0.9]
  \fill[purple] (2., 2.2) circle (4pt) node[right] {$\gintilde_{m,\hat\lambda}$};
  \fill[magenta] (1.0, 1.2) circle (4pt) node[below] {\ \ \ \ \ \ \ \ $\ftilde_{n,\hat\lambda}$};
  \draw[<->, BrickRed!50, dashed, thick] (1.1, 1.3) -- (1.9, 2.1) node[pos=0.3, right] {$\ehin$};
  \draw[line width=1pt] (2.2, 0.) .. controls (0.0, 1.0) .. (2.6, 4.);
  \draw[line width=1pt] (2.6, 4.) .. controls (3.9, 3.5) .. (2.2, 0.) node[pos=0.999, right] {$\F_{\hat\lambda}$};
  \end{tikzpicture}
  \caption{Hold-in error}
  \label{fig:hin}
\end{subfigure}
\caption{{\em Decompositions of excess risk $\er$.} Figure~\ref{fig:ed-orig}
  shows the decomposition of $\er$ incurred by the approximate empirical risk
  minimizer $\ftilde_{n,\lambda} \in \F_\lambda$ with respect to the Bayes
  optimal $\fstar$. Figure~\ref{fig:mcm} visualizes the additional excess risk
  in the form of the {\em model class mis-specification risk}\, $\emcm$ incurred
  by selecting the sub-optimal hyperparameter $\hat{\lambda}$ in place of
  $\bar{\lambda}$. Figure~\ref{fig:hin} visualizes the {\em hold-in risk}\,
  $\ehin$ incurred from using $\gintilde_{m,\hat\lambda}$ instead of
  $\ftilde_{n,\hat\lambda}$.}
\label{fig:edcomp}
\end{figure*}
\section{Decomposing the excess risk}\label{sec:probset}
For a particular method (decision trees, linear models, neural networks), let
$\F_\lambda$ denote the function class for some {\em fixed} hyperparameter
$\lambda \in \Lambda$ (tree depth, number of trees for tree ensembles;
regularization parameter for linear and nonlinear models) in the space of valid
hyperparameters (HPs) $\Lambda$. For any model or function $f : \mathcal{X} \to
\mathcal{Y}$ with $(x, y), x \in \mathcal X, y \in \mathcal Y$ generated from a
distribution $P$ over $\mathcal{X} \times \mathcal{Y}$, and a loss function
$\ell: \mathcal{Y} \times \mathcal{Y}$, the expected risk $E(f)$ and the
empirical risk $E_n(f)$ with $n$ samples $\{(x_i, y_i)\}_{i=1}^n \sim P^n$ of
$f$ is given by
{
\begin{equation}\label{eq:risk}
\begin{split}
  & E(f)
  =  \E_{(x, y) \sim P} \left[ \ell(y, f(x)) \right]
  = \int \ell(y, f(x)) dP,
\\
  & E_n(f)
  =  \E_n \left[ \ell(y, f(x) ) \right]
  = \frac{1}{n} \sum\nolimits_{i=1}^n \ell(y_i, f(x_i)).
\end{split}
\end{equation}
}
We denote the {\em Bayes optimal model} as $\fstar$ where, for any $(x, y) \sim
P$,
\begin{equation}\label{eq:fstar}
\fstar(x) = \argmin_{\hat{y} \in \mathcal{Y}} \E\left [ \ell(y,\hat{y}) | x \right].
\end{equation}
We denote with the following:
\begin{equation}\label{eq:fbar-fhat}
  \fbar_\lambda = \argmin_{f \in \F_\lambda} E(f),
  \qquad \qquad
  \fhat_{n,\lambda} = \argmin_{f \in \F_\lambda} E_n(f),
\end{equation}
as the {\em true risk minimizer} and the {\em empirical risk minimizer} (with
$n$ samples) in model class $\F_\lambda$ respectively.
Table~\ref{tab:symbs} defines the various symbols used in the sequel.

\begin{table}[tb]
\caption{Table of symbols}
\label{tab:symbs}
\begin{center}
{
\begin{tabular}{cl} 
\toprule
Symbol & Description (1st location in text) \\
\midrule
$E(f)$ & True risk of any model $f$ \eqref{eq:risk} \\
$E_n(f)$ & Empirical risk of any model $f$ with $n$ samples \eqref{eq:risk} \\
$\Lambda$ & Set of $L$ hyperparameters (HPs) $\lambda$, $L = |\Lambda|$ (\S \ref{sec:probset}) \\
$\F_\lambda$ & Model class for hyperparameter (HP) $\lambda$ (\S \ref{sec:probset}) \\
$\fstar$ & Bayes optimal predictor \eqref{eq:fstar} \\
$\fbar_{\lambda}$ & True risk minimizer in $\F_\lambda$ \eqref{eq:fbar-fhat} \\
$\fhat_{n, \lambda}$ & Empirical risk minimizer in $\F_\lambda$ with $n$ samples \eqref{eq:fbar-fhat} \\
$\ftilde_{n, \lambda}$ & Approx. empirical risk minimizer $\F_\lambda$ with $n$ samples \eqref{eq:ftilde} \\
$\bar{\lambda}$ & Oracle hyperparameter (HP) $\arg \min_{\lambda \in \Lambda} E(\fhat_{n, \lambda})$ (\S \ref{sec:probset})
\\
$\hat{\lambda}$ & Solution to empirical hyperparameter selection \eqref{eq:hpo-cv}\\
$\ginhat_{m, \lambda}$ & Empirical risk minimizer in $\F_\lambda$ with $m$ samples \eqref{eq:hpo-cv} \\
$\gintilde_{m, \lambda}$ & Approx. empirical risk minimizer in $\F_\lambda$ with $m$ samples \eqref{eq:hpo-cv} \\
\bottomrule
\end{tabular}
}
\end{center}
\end{table}

When performing ERM over $\F_\lambda$, the {\em excess risk} incurred $\er =
E(\fhat_{n,\lambda}) - E(\fstar)$ decomposes into two terms: (i)~the {\em
  approximation risk} $\eapp(\lambda) = E(\fbar_\lambda) - E(\fstar)$, and
(ii)~the {\em estimation risk} $\eest(n, \lambda) = E(\fhat_{n,\lambda}) -
E(\fbar)$. For limited number of samples $n$, there is a tradeoff between
$\eapp$ and $\eest$, where a larger function class $\F_\lambda$ usually reduces
$\eapp(\lambda)$ but increases $\eest(n,\lambda)$~\citep{vapnik2006estimation,
  devroye2013probabilistic}.
Bottou \& Bousquet~\citep{bottou2008tradeoffs} study the tradeoffs in a
``large-scale'' setting where the learning is compute bound (in addition to the
limited number of samples $n$). Given any computational budget $T$, they
consider the learning setting ``small-scale'' when the number of samples $n$ is
small enough to allow for the ERM to be performed to arbitrary precision. In
this case, the tradeoff is between the $\eapp$ and $\eest$ terms (as
above). They consider the large scale setting where the ERM needs to be
approximated given the computational budget and discuss the tradeoffs in the
excess risk of an approximate empirical risk minimizer $\ftilde_{n,\lambda} \in
\F_\lambda$. In addition to $\eapp$ and $\eest$, they introduce the {\em
  optimization risk} term $\eopt$ -- the excess risk incurred due to approximate
ERM -- and argue that, in compute-bound large-scale learning, approximate ERM on
all the samples $n$ can achieve better generalization than high precision ERM on
a subsample of size $n' \leq n$.  Figure~\ref{fig:ed-orig} provides a visual
representation of this excess risk decomposition.

Given a set $\Lambda$ of $L$ HPs $\lambda \in \Lambda$, and $n$ samples from the
true distribution, we wish to find the {\em oracle} HP $\bar{\lambda}$ such that
the (approximate) ERM solution $\ftilde_{n,\bar{\lambda}}$ has the best possible
excess risk -- $\bar{\lambda} = \argmin_{\lambda \in \Lambda}
E(\ftilde_{n,\lambda})$. However, in practice, with $n$ {\sc iid} (independent
and identically distributed) samples from $P$, we use cross-validation for model
selection and solve the following {\em bilevel} problem to pick the HP
$\hat\lambda$:
%
\begin{equation} \label{eq:hpo-cv}
\begin{split}
& 
  \hat{\lambda} = \argmin\nolimits_{\lambda \in \Lambda} \,
   E^v_{\mu}(\gintilde_{m,\lambda})
   \qquad\qquad\quad\ \ \ \mbox{\tt (outer)}
\\
& 
  \gintilde_{m,\lambda} \in \left \{
    \gin:
    E_{m} (\gin) \leq E_{m} (\ginhat_{m,\lambda}) + \rhoin
  \right\},
  \mbox{ \tt (inner)}
\end{split}
\end{equation}%
%
where the {\tt inner} problem is an approximate ERM on $\F_\lambda$ for each
$\lambda \in \Lambda$ with $m < n$ samples at an approximation tolerance of
$\rhoin > 0$ producing $\gintilde_{m, \lambda}$ (we use $\ginhat_{m,\lambda}$ to
denote the exact ERM solution in $\F_\lambda$ with $m$ samples), and the {\tt
  outer} problem considers an objective $E^v_{\mu}(\cdot)$ which is evaluated
using $\mu < n$ samples held-out from the ERM in the inner problem -- while
$E_{m}(\cdot)$ and $E^v_{\mu}(\cdot)$ might have the same form, the
\textcolor{blue}{$\cdot^v$} superscript highlights their difference.
Then, a final approximate ERM on $\F_{\hat\lambda}$ with all $n$ samples to
$\rhoout$ tolerance produces
\begin{equation}\label{eq:ftilde}
\ftilde_{n,\hat\lambda} \in \left\{ f \in \F_{\hat\lambda}:
E_n(f) \leq E_n(\fhat_{n,\hat\lambda}) + \rhoout \right\},
\end{equation}
with $\fhat_{n,\hat\lambda}$ denoting the exact ERM solution in
$\F_{\hat\lambda}$.
This embodies the common practice of splitting the samples into a {\em training}
and a {\em held-out validation} set (of sizes $m, \mu < n$ respectively with $m
+ \mu \leq n$). In {\em $k$-fold cross-validation}, the {\tt inner} ERM is
solved $k$ times for each HP $\lambda$ (on $k$ different sets of size $m =
n-\nicefrac{n}{k}$ each), and the {\tt outer} optimization averages the
objectives from $k$ held-out sets (of size $\mu = \nicefrac{n}{k}$) across the
$k$ learned models. In this paper, we focus on CV with a single
training-validation split, and defer $k$-fold CV to future work.

At this point, multiple choices have to be made for computational and
statistical purposes:
\begin{itemize}
\item The number of samples $m$ drives the computational cost of solving the
  {\tt inner} problem for each $\lambda \in \Lambda$ -- larger $m$ requires
  larger compute budget.
\item The approximation tolerance $\rhoin$ in the {\tt inner} ERM also drives
  the computational cost -- smaller $\rhoin$ requires larger compute budget.
\item The approximation tolerance $\rhoout$ in the final ERM over
  $\F_{\hat\lambda}$ drives the computational cost similar to $\rhoin$ but to a
  lesser extent since it is only over a single $\hat\lambda \in \Lambda$ instead
  of for each $\lambda \in \Lambda$. For this reason, $\rhoout$ is usually
  selected to be smaller\footnote{Often, for computational reasons, $m$ might be
    much less than $n$, and training the final $\ftilde_{n, \hat\lambda}$ on all
    $n$ samples to a tolerance of $\rhoin$ might be computationally infeasible,
    making $\rhoout > \rhoin$.} than $\rhoin$.
\item The function $\ftilde_{n,\hat\lambda}$ is selected over
  $\gintilde_{m,\hat\lambda}$ for statistical reasons since the former gets more
  training data.
\end{itemize}

Many of these choices are often made ad hoc or via trial and error. To the best
of our knowledge, there is no mathematically grounded way of making some of
these practical choices. Moreover, it is not clear what is precisely gained by
selecting $\ftilde_{n,\hat\lambda}$ over $\gintilde_{m,\hat\lambda}$. Existing
theoretical guarantees for CV based model selection focus on the excess risk of
$\gintilde_{m,\hat\lambda}$, while in practical HPO,
$\ftilde_{n,\hat\lambda}$ is deployed, indicating a gap between theory and
practice. In this paper, we try to bridge this gap, and in the process, provide
a practical heuristic that allows us to select between
$\gintilde_{m,\hat\lambda}$ and $\ftilde_{n,\hat\lambda}$ in a data-driven
manner. Furthermore, $\rhoin$ and $\rhoout$ provide a way to control the
computation vs excess risk tradeoff, but it is not clear how to set them to
extract computational gains without significantly increasing the excess risk.
We explicitly highlight the role of $\rhoin$ and $\rhoout$ in the excess risk
and provide practical heuristics to select $\rhoin$ and $\rhoout$ in a
data-driven manner to better control this tradeoff.
\subsection{Novel excess risk decomposition}
We first present some intuitive decompositions of the excess risk to understand
the different sources of additional risk (and gains!). After the selection of
$\hat{\lambda}$ by solving problem~\eqref{eq:hpo-cv}, existing literature
focuses on the excess risk of $\gintilde_{m,\hat\lambda}$, yet we are not aware
of any decomposition of its excess risk. We decompose this excess risk as:
{
\begin{equation}\begin{split}
\er & = E(\gintilde_{m,\hat\lambda}) - E(\fstar) \\
    & =
    \underbrace{E(\gintilde_{m,\hat\lambda}) - E(\gintilde_{m,\bar{\lambda}})}_{\emcm}
    + \underbrace{E(\gintilde_{m,\bar{\lambda}}) - E(\ginhat_{m,\bar{\lambda}})}_{\eopt}
    + \underbrace{E(\ginhat_{m, \bar{\lambda}}) - E(\fbar_{\bar{\lambda}}) }_{\eest}
    + \underbrace{E(\fbar_{\bar{\lambda}}) - E(\fstar)}_{\eapp},
    \label{eq:decomp-hpo-nmd}
\end{split}\end{equation}%
}%
where $\fbar_{\bar{\lambda}}, \ginhat_{m,\bar{\lambda}}$ and
$\gintilde_{m,\bar{\lambda}}$ are the true risk minimizer, exact ERM
solution and approximate ERM solution respectively in the function class
$\F_{\bar{\lambda}}$ corresponding to the oracle HP
$\bar{\lambda}$. We introduce a new term $\emcm =
E(\gintilde_{m,\hat\lambda}) - E(\gintilde_{m,\bar{\lambda}})$, the {\em
  model class mis-specification risk}. Figure~\ref{fig:mcm} visualizes this term
in the excess risk. This term incorporates the excess risk from selecting a
suboptimal HP (and corresponding model class).
However, there is a potential additional excess risk that is often ignored in
literature but is considered crucial in practice -- the risk from learning the
model $\gintilde_{m,\hat\lambda}$ on $m < n$ samples instead of all $n$ samples,
or the {\em hold-in risk}, defined as $\ehin = E(\gintilde_{m,\hat\lambda}) -
E(\ftilde_{n,\hat\lambda})$. This term is visualized in
Figure~\ref{fig:hin}. This excess risk term does not appear explicitly in the
risk decomposition \eqref{eq:decomp-hpo-nmd} for $\gintilde_{m,\hat\lambda}$ but
rather is implicitly incorporated in the estimation risk $\eest$.  However, when
studying the excess risk of $\ftilde_{n,\hat\lambda}$, the $\ehin$ does
explicitly appear in the decomposition:
%
\begin{equation}\begin{split}
\er & = E(\ftilde_{n,\hat\lambda}) - E(\fstar)\\
    & =
    \underbrace{E(\ftilde_{n,\hat\lambda}) - E(\gintilde_{m,\hat\lambda})}_{-\ehin}
    + \underbrace{E(\gintilde_{m,\hat\lambda}) - E(\gintilde_{m,\bar{\lambda}})}_{\emcm}
    \label{eq:decomp-hpo-md}
    + \underbrace{E(\gintilde_{m, \bar{\lambda}}) - E(\fstar) }_{
      \eopt+\eest+\eapp \text{ (see \eqref{eq:decomp-hpo-nmd})}
    }
\end{split}\end{equation}%
%
This excess risk decomposition for $\ftilde_{n,\hat\lambda}$ is different from
previous decompositions in that the ``$-\ehin$'' term in this excess risk
decomposition is potentially a {\em risk deficit} instead of an additional risk,
highlighting potential {\em risk we can recover} from this common practice of
training on all the data with the selected HP $\hat\lambda$.
These decompositions are intended to explicitly highlight the different sources
of risk (and gains) in the practical HPO process, providing some
intuition into the problem.
\subsection{Empirical Validation} \label{sec:probset:emp}
\begin{table}[tb]
\caption{Empirical (relative) estimate of $\emcm = E(\gintilde_{m,\hat\lambda})
  - E(\gintilde_{m,\bar\lambda})$ across 40 OpenML datasets of varying number of
  samples $n$ and varying sizes of the held-out validation set
  $\nicefrac{\mu}{n}$. We report the percentage of the experiments (for each
  combination of $n$ and $\nicefrac{\mu}{n}$) where $\hat \lambda \not= \bar
  \lambda$. For these experiments, we also report the (estimate of the) average
  relative excess risk $\tilde \Delta$ incurred.}
\label{tab:emcm-est}
\centering
\begin{tcolorbox}[width=0.6\textwidth, colback=yellow!5, before skip=0.0cm, after skip=0.0cm, boxsep=0.0cm, top=0.1cm, bottom=0.1cm]
\begin{center}
{
\begin{tabular}{lccccc}
  $n$  &  $\mu / n$
  & \multicolumn{2}{c}{AuROC}
  & \multicolumn{2}{c}{Acc} \\
  &  &  $\hat \lambda \not= \bar \lambda$  & $\tilde \Delta$
  & $\hat \lambda \not= \bar \lambda$  & $\tilde \Delta$ \\
  \midrule
  1000   &  0.1  & 95 &  2.00 & 75 &  1.67  \\
     &  0.2  & 65 &  0.85 & 60 &  1.24  \\
     &  0.3  & 50 &  1.12 & 50 &  1.43  \\
  \midrule
  5000   &  0.1  & 65 &  0.12 & 80 &  0.60  \\
     &  0.2  & 70 &  0.14 & 75 &  0.51  \\
     &  0.3  & 70 &  0.08 & 80 &  0.69  \\
  \midrule
  10000    &  0.1  & 65 &  0.11 & 75 &  0.24  \\
     &  0.2  & 65 &  0.10 & 85 &  0.22  \\
     &  0.3  & 50 &  0.01 & 90 &  0.27  \\
  \midrule
  50000    &  0.1  & 80 &  0.34 & 90 &  0.25  \\
     &  0.2  & 90 &  0.25 & 85 &  0.16  \\
     &  0.3  & 80 &  0.27 & 90 &  0.18  \\
\end{tabular}
}
\end{center}
\end{tcolorbox}
\end{table}
To evaluate the practical significance of these newly introduced risk terms
$\emcm$ and $\ehin$, we consider the HPO problem with
LightGBM~\citep{ke2017lightgbm} across 40 OpenML binary classification
datasets~\citep{OpenML2013}. We consider 10 datasets each with number of rows in
the ranges 1000-5000, 5000-10000, 10000-50000 and 50000-100000. For each
dataset, we consider 3 different values of $\nicefrac{\mu}{n}$ (the held-out
fraction). We perform this exercise with two classification metrics -- area
under the ROC curve (AuROC) and balanced accuracy (Acc).  We approximate the
true risks for the post-hoc analysis using an additional test set not involved
in the HPO. We detail the datasets and HP search space in
Appendix~\ref{asec:emp}.

For each HPO experiment (dataset and held-out fraction), we note whether the
selected HP $\hat \lambda$ matches the oracle HP $\bar \lambda$ (found post-hoc
using the test set), and the (relative) estimate $\tilde \Delta$ of $\emcm$. We
report the aggregate findings for each set of size range and held-out fraction
in Table~\ref{tab:emcm-est}. The results indicate that, with the smaller
datasets ($n \in$ 1000-5000), a higher value of $\nicefrac{\mu}{n}$ reduces the
chances of missing the oracle HP $\bar \lambda$, but this effect is no longer
present with the larger datasets. As the dataset sizes increase, the chances of
missing the oracle HP does increase on aggregate, but the relative risk $\tilde
\Delta$ decreases from $2\%$ down to around $0.2\%$. So the $\emcm$ term
benefits from larger data but the effect is still significant. However, there is
no explicit indication of how the different terms such as $n, \mu$ play a role.

\begin{table}[t]
\caption{Comparing relative performances of $\ftilde_{n,\hat\lambda}$ and
  $\gintilde_{m,\hat\lambda}$ in HPO of LightGBM with balanced accuracy.}
\label{tab:ehin-acc}
\centering
\begin{tcolorbox}[colback=yellow!5, width=0.65\textwidth, before skip=0.0cm, after skip=0.0cm, boxsep=0.0cm, top=0.1cm, bottom=0.1cm]
\begin{center}
{
\begin{tabular}{lcccccc}
  $n$  &  $\mu / n$
  & $\ftilde \approx \gintilde$
  & $\ftilde \checkmark$ & $\ftilde$ gain
  & $\gintilde \checkmark$ & $\gintilde$ gain \\
  \midrule
   1000	 & 0.1 &  40  & 45  &  1.25  &  15  &  0.18 \\
     & 0.2 &  25  & 60  &  1.81  &  15  &  1.77 \\
     & 0.3 &  30  & 50  &  2.57  &  20  &  1.50 \\
   \midrule
   5000	 & 0.1 &  30  & 45  &  0.51  &  25  &  0.48 \\
     & 0.2 &  20  & 55  &  0.47  &  25  &  0.66 \\
     & 0.3 &  20  & 70  &  0.82  &  10  &  0.74 \\
   \midrule
   10000	 & 0.1 &  10  & 60  &  0.17  &  30  &  0.12 \\
     & 0.2 &  5  & 70  &  0.29  &  25  &  0.09 \\
     & 0.3 &  20  & 60  &  0.54  &  20  &  0.03 \\
   \midrule
   50000	 & 0.1 &  15  & 55  &  0.16  &  30  &  0.07 \\
     & 0.2 &  15  & 70  &  0.10  &  15  &  0.09 \\
     & 0.3 &  0  & 80  &  0.16  &  20  &  0.03 \\
\end{tabular}
}
\end{center}
\end{tcolorbox}
\end{table}

To understand the impact of $\ehin$, we further compare the performance of the
$\gintilde_{m, \hat \lambda}$ involved in the HPO to the final retrained
$\ftilde_{n, \hat \lambda}$ in the above HPO experiments. We note the percentage
of the time (i)~their performances were within a relative difference of
$10^{-5}$ ($\ftilde \approx \gintilde$), (ii)~$\ftilde_{n, \hat\lambda}$ was
better ($\ftilde \checkmark$), and (iii)~$\gintilde_{m, \hat\lambda}$ was better
($\gintilde \checkmark$). In both cases (ii) and (iii), we noted the average
relative gain the better choice provided (``$\ftilde$ gain'' and ``$\gintilde$
gain''). The results are aggregated across dataset sizes and held-out fraction
$\nicefrac{\mu}{n}$ in Table~\ref{tab:ehin-acc} for the balanced accuracy metric
and in Table~\ref{tab:ehin-auroc} for the AuROC metric.  The results indicate
that, in most cases, $\ftilde_{n, \hat\lambda}$ is a better choice, justifying
the common practice. However, it also indicates that, in a significant fraction
of the cases (around 20\% in most but up to 40\%), $\gintilde_{m, \hat \lambda}$
appears to be the better choice against common intuition. The results also
indicate that, when $\ftilde_{n, \hat\lambda}$ is the better choice, it also
provides higher relative gains over $\gintilde_{m, \hat\lambda}$ on average
across most experimental settings. But the average relative gains of
$\gintilde_{m, \hat\lambda}$ over $\ftilde_{n, \hat\lambda}$ are still
significant in most cases across both classification metrics.
\begin{table}[t]
\caption{Comparing relative performances of $\ftilde_{n,\hat\lambda}$ and
  $\gintilde_{m,\hat\lambda}$ in HPO of LightGBM with AuROC.}
\label{tab:ehin-auroc}
\centering
\begin{tcolorbox}[colback=yellow!5, width=0.65\textwidth, before skip=0.0cm, after skip=0.0cm, boxsep=0.0cm, top=0.1cm, bottom=0.1cm]
\begin{center}
{
\begin{tabular}{lcccccc}
  $n$  &  $\mu / n$
  & $\ftilde \approx \gintilde$
  & $\ftilde \checkmark$ & $\ftilde$ gain
  & $\gintilde \checkmark$ & $\gintilde$ gain \\
  \midrule
  1000	 & 0.1 &  5  & 80  &  0.79  &  15  &  0.72 \\
     & 0.2 &  0  & 75  &  0.42  &  25  &  2.15 \\
     & 0.3 &  10  & 70  &  0.34  &  20  &  0.27 \\
  \midrule
  5000	 & 0.1 &  15  & 45  &  0.15  &  40  &  0.15 \\
     & 0.2 &  10  & 60  &  0.24  &  30  &  0.06 \\
     & 0.3 &  10  & 50  &  0.39  &  40  &  0.05 \\
  \midrule
  10000	 & 0.1 &  5  & 75  &  0.10  &  20  &  0.02 \\
     & 0.2 &  0  & 90  &  0.09  &  10  &  0.01 \\
     & 0.3 &  0  & 100  &  0.21  &  0  &  0.00 \\
  \midrule
  50000	 & 0.1 &  10  & 50  &  0.15  &  40  &  0.18 \\
     & 0.2 &  5  & 80  &  0.18  &  15  &  0.05 \\
     & 0.3 &  0  & 90  &  0.35  &  10  &  0.18 \\
\end{tabular}
}
\end{center}
\end{tcolorbox}
\end{table}

These results indicate that there is no single best choice and we can obtain
improved performance if we are able to make this choice in a more
problem-dependent manner. In the next section, we theoretically bound the excess
risk to explicitly understand the impact of the different choices in the HPO and
leverage these dependencies for improved performance.
\section{Bounding the excess risk}\label{sec:newform}
In this section, we bound the excess risks of $\gintilde_{m,\hat\lambda}$ and
$\ftilde_{n,\hat\lambda}$ and try to understand any improvement
$\ftilde_{n,\hat\lambda}$ might provide and the interaction with the ERM
approximation tolerances $\rhoin$ and $\rhoout$ based on our decompositions.  We
have the following result for $\gintilde_{m,\hat\lambda}$:
\begin{theorem}
\label{thm:ehpo-bnd-nmd}
Let $P$ be a distribution over $\mathcal X \times \mathcal Y$ and let $\ell:
\mathcal Y \times \mathcal Y$, $\mathcal Y \subset \Real$, be a $B$-bounded
$\beta$-Lipschitz loss. Let $\F_\lambda$ be a class of functions $f: \mathcal X
\to \mathcal Y$ for any HP $\lambda \in \Lambda$.
Let $\hat\lambda$ be the solution of \eqref{eq:hpo-cv} over the set of $L$ HPs
$\Lambda$ with ERM on $m < n$ samples to approximation tolerance $\rhoin \geq
0$, a held-out set of size $\mu < n$, and $m + \mu \leq n$.
With probability at least $1 - \delta$ for $\delta \in (0, 1)$, the excess risk
$\er$ of $\gintilde_{m,\hat\lambda}$ in \eqref{eq:decomp-hpo-nmd} is bounded as:
\begin{equation}\label{eq:excess-risk-bnd-nmd}
\begin{split}
\er & \leq
  \min\nolimits_{\lambda \in \Lambda} \left\{
    8 \, \beta \, \mathcal R_{m}(\F_\lambda)
    + \eapp(\lambda)
  \right\}
  + \rhoin
  + 2 B \, \sqrt{\log (\nicefrac{2(L + 1)}{\delta})} \left(
    \nicefrac{2}{\sqrt{2m}} + \nicefrac{1}{\sqrt{2 \mu}}
  \right),
\end{split}
\end{equation}
where $\mathcal R_{m}(\F_\lambda)$ is the Radamacher complexity of $\F_\lambda$.
\end{theorem}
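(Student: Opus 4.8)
The plan is to split the argument along the two stages of \eqref{eq:hpo-cv}: first control the \texttt{outer} selection of $\hat\lambda$ against an oracle using the independence of the held-out set, and then control the \texttt{inner} approximate ERM for a \emph{single, deterministically chosen} comparator class. Throughout I write $E^v_\mu(\cdot)$ for the empirical risk on the $\mu$ held-out points, whose conditional mean given the $m$ training points is $E(\cdot)$, and I let $\bar\lambda \in \argmin_{\lambda\in\Lambda} E(\gintilde_{m,\lambda})$ be the oracle HP minimizing the true risk of the inner solution.

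For the outer step I would start from the defining optimality $E^v_\mu(\gintilde_{m,\hat\lambda}) \le E^v_\mu(\gintilde_{m,\bar\lambda})$ and add and subtract true risks to get
\[
E(\gintilde_{m,\hat\lambda}) \le E(\gintilde_{m,\bar\lambda}) + \big(E(\gintilde_{m,\hat\lambda}) - E^v_\mu(\gintilde_{m,\hat\lambda})\big) + \big(E^v_\mu(\gintilde_{m,\bar\lambda}) - E(\gintilde_{m,\bar\lambda})\big).
\]
The crucial bookkeeping is that the two deviations need different treatment. Since $\hat\lambda$ depends on the held-out set, the first must be dominated by $\max_{\lambda\in\Lambda}\big(E(\gintilde_{m,\lambda})-E^v_\mu(\gintilde_{m,\lambda})\big)$ and controlled by Hoeffding with a union bound over the $L$ candidates (conditioning on the training sample, each $\gintilde_{m,\lambda}$ is fixed). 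In contrast, $\bar\lambda$ is measurable with respect to the training sample alone and is therefore independent of the held-out set, so the second deviation is a single-function Hoeffding bound --- one event, not $L$. This asymmetry is exactly what produces $\log(L+1)$ rather than $\log(2L)$: the $L+1$ one-sided held-out deviations, taken at confidence $\delta/2$, are each of order $B\sqrt{\log(2(L+1)/\delta)/(2\mu)}$, and the two that appear combine into the $\tfrac{1}{\sqrt{2\mu}}$ term with leading constant $2B$.

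For the inner step I would bound $E(\gintilde_{m,\bar\lambda}) = \min_{\lambda} E(\gintilde_{m,\lambda}) \le E(\gintilde_{m,\lambda_0})$ for the \emph{deterministic} comparator $\lambda_0 \in \argmin_{\lambda}\{8\beta\,\mathcal R_m(\F_\lambda)+\eapp(\lambda)\}$; because $\lambda_0$ is data-independent, only this single class needs a uniform-convergence event and no union over classes is incurred. For this fixed class I would use the standard ERM chain: writing $E(\gintilde_{m,\lambda_0})-E(\fbar_{\lambda_0})$ as the sum of $E(\gintilde_{m,\lambda_0})-E_m(\gintilde_{m,\lambda_0})$, the empirical gap $E_m(\gintilde_{m,\lambda_0})-E_m(\fbar_{\lambda_0})$, and $E_m(\fbar_{\lambda_0})-E(\fbar_{\lambda_0})$, the middle gap is at most $\rhoin$ by \eqref{eq:hpo-cv} (since $E_m(\ginhat_{m,\lambda_0})\le E_m(\fbar_{\lambda_0})$), and the two outer gaps are each at most $\sup_{f\in\F_{\lambda_0}}|E(f)-E_m(f)|$. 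Symmetrization bounds the expected supremum by $2\mathcal R_m(\ell\circ\F_{\lambda_0})$, and the Ledoux--Talagrand contraction inequality for the $\beta$-Lipschitz loss gives $\mathcal R_m(\ell\circ\F_{\lambda_0})\le 2\beta\,\mathcal R_m(\F_{\lambda_0})$; the two copies of the supremum then account for the $8\beta\,\mathcal R_m(\F_\lambda)$ term, while McDiarmid's inequality (bounded differences of order $B/m$) supplies the $\tfrac{2}{\sqrt{2m}}$ deviation term. Adding $\eapp(\lambda_0) = E(\fbar_{\lambda_0})-E(\fstar)$ and recalling that $\lambda_0$ was the minimizer converts $8\beta\,\mathcal R_m(\F_{\lambda_0})+\eapp(\lambda_0)$ into $\min_\lambda\{\cdot\}$.

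Chaining the outer and inner bounds on the intersection of the two good events gives \eqref{eq:excess-risk-bnd-nmd}. I expect the main obstacle to be the concentration bookkeeping rather than any individual inequality: one must argue the independence/measurability structure carefully so that the data-dependent $\hat\lambda$ pays a union bound over $L$ while the oracle $\bar\lambda$ pays only one event, and so that the $\min_\lambda$ is obtained by choosing a deterministic comparator $\lambda_0$ instead of union-bounding uniform convergence over all $L$ classes. A naive union over all classes for the inner step (and a two-sided union over $L$ for the held-out step) would replace the clean $\log(L+1)$ by terms scaling with $L$ in both the $\mu$ and $m$ contributions and would blur the additive structure of the stated bound. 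The remaining constants ($8\beta$ and the leading $2B$ factors, with the inner confidence term bounded by the common factor $\sqrt{\log(2(L+1)/\delta)}$) then follow from the standard symmetrization, contraction, and McDiarmid steps with the usual care about the loss range.
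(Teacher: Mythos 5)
Your proposal is correct and follows essentially the same route as the paper's proof: the outer step uses the validation-set optimality of $\hat\lambda$ sandwiched between two Hoeffding deviations on the held-out risk, and the inner step chains the approximate-ERM definition through $E_{m}(\ginhat_{m,\lambda}) \le E_{m}(\fbar_{\lambda})$ with two applications of the Rademacher-complexity uniform-convergence bound (the paper's Theorem~\ref{athm:max-emp-true-risk-diff-rc}, i.e.\ your symmetrization, contraction, and McDiarmid steps). The only difference is bookkeeping: you pay a union bound over the $L$ candidates only for the $\hat\lambda$-deviation, a single event for the oracle, and a single deterministic comparator class for the inner uniform convergence, whereas the paper applies the inner chain to every $\lambda \in \Lambda$ and sets $\delta' = \delta/(2(L+1))$ across all $2 + 2L$ events before minimizing --- both yield \eqref{eq:excess-risk-bnd-nmd}.
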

The proof is provided in Appendix~\ref{asec:proofs:nmd}. The above bound
\eqref{eq:excess-risk-bnd-nmd} highlights how the different terms affect the
excess risk bounds -- we get an excess risk bound within an additive factor of
the class that possesses the minimum combined (scaled) Radamacher complexity (a
proxy for estimation risk $\eest$~\citep{bartlett2002rademacher}) and
approximation risk $\eapp$. Note that the Radamacher complexity is with respect
to $m < n$ samples, highlighting the statistical inefficiency introduced by the
held-out data for CV. The result also indicates that a larger held-out set
(larger $\mu$) is preferable. We study this excess risk bound to identify the
effect of the choices in HPO (such as $m, \mu, \rhoin$); the best choice needs
to balance the terms in the bound -- making one term, such as $\rhoin$ much
smaller (say, by an order of magnitude) than the other terms will not improve
the excess risk significantly, but an order of magnitude larger $\rhoin$ will
have significant ill-effects.
We have the following result for $\ftilde_{n,\hat\lambda}$:
\footnote{
With further structural assumptions (such as relationships between the
variance and expectations of the functions), we can improve the
$\nicefrac{1}{\sqrt{\mu}}$ dependence to $\nicefrac{1}{\mu}$. See for example,
Theorem~\ref{athm:ehpo-bnd-nmd} in Appendix~\ref{asec:proofs}. We focus on
Theorem~\ref{thm:ehpo-bnd-nmd} since this will subsequently allow us to derive
practical heuristics that we cannot with the structural assumptions required
to get the tighter excess risk bounds.
}  
\begin{theorem}
\label{thm:ehpo-bnd-md}
Under the conditions of Theorem~\ref{thm:ehpo-bnd-nmd}, let
$\ftilde_{n,\hat\lambda} \in \F_{\hat\lambda}$ be obtained via approximate ERM
with tolerance $\rhoout \geq 0$ over all $n$ samples in \eqref{eq:ftilde}. Let
$\mathcal{I}_{n,m}^{\hat\lambda}(\rhoin, \rhoout) :=
E_n(\gintilde_{m,\hat\lambda}) - E_n (\ftilde_{n,\hat\lambda})$ be the
``empirical risk improvement'' from performing the ERM over all $n$ samples.
With probability at least $1 - \delta$ for any $\delta \in (0,1)$, the excess
risk $\er$ of $\ftilde_{n,\hat\lambda}$ in \eqref{eq:decomp-hpo-md} is bounded
as:
{
\begin{equation}\label{eq:excess-risk-bnd-md}
\begin{split}
\er \leq &
  \min_{\lambda \in \Lambda} \left\{
    8 \, \beta \, \mathcal R_{m}(\F_\lambda)
    + \eapp(\lambda)
  \right\}
  + 8 \, \beta \, \mathcal{R}_n(\F_{\hat\lambda})
  + \rhoin + B'\left(
    \nicefrac{2}{\sqrt{2n}} + \nicefrac{2}{\sqrt{2m}} + \nicefrac{1}{\sqrt{2 \mu}}
  \right) - \bar{\mathcal I}
\end{split}
\end{equation}
}
where $B' := 2B\sqrt{\log(\nicefrac{2(L+2)}{\delta})}$, and
\begin{equation*}\label{def:thm:ibar}
  \bar{\mathcal I} := \max \left\{
    \mathcal{I}_{n,m}^{\hat\lambda}(\rhoin,\rhoout),
    \mathcal{I}_{n,m}^{\hat\lambda}(0,0) - \rhoout - (\nicefrac{\mu}{n})B
  \right\},
\end{equation*}
with $\mathcal{I}_{n,m}^{\hat\lambda}(0,0) = E_n(\ginhat_{m,\hat\lambda}) -
E_n(\fhat_{n,\hat\lambda})$ denoting the empirical risk improvement if $\rhoin =
\rhoout = 0$.
\end{theorem}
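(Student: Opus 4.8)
The plan is to build directly on Theorem~\ref{thm:ehpo-bnd-nmd} using the decomposition \eqref{eq:decomp-hpo-md}. Writing $\er = E(\ftilde_{n,\hat\lambda}) - E(\fstar) = -\ehin + \big(E(\gintilde_{m,\hat\lambda}) - E(\fstar)\big)$, the second group is exactly the excess risk of $\gintilde_{m,\hat\lambda}$ already controlled by \eqref{eq:excess-risk-bnd-nmd}. Hence it suffices to upper bound the single extra term $-\ehin = E(\ftilde_{n,\hat\lambda}) - E(\gintilde_{m,\hat\lambda})$, or equivalently to \emph{lower bound} the population hold-in improvement $\ehin = E(\gintilde_{m,\hat\lambda}) - E(\ftilde_{n,\hat\lambda})$ by the observable empirical certificate $\bar{\mathcal I}$ minus a complexity/concentration price. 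Matching the target \eqref{eq:excess-risk-bnd-md}, I want to establish $\ehin \ge \bar{\mathcal I} - 8\beta\mathcal{R}_n(\F_{\hat\lambda}) - B'\nicefrac{2}{\sqrt{2n}}$ on the good event of Theorem~\ref{thm:ehpo-bnd-nmd}, and then simply add the two bounds, intersecting the underlying events.

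First I would insert the empirical risk $E_n$ to expose the empirical improvement:
\[
  \ehin = \underbrace{\big(E_n(\gintilde_{m,\hat\lambda}) - E_n(\ftilde_{n,\hat\lambda})\big)}_{=\,\mathcal{I}_{n,m}^{\hat\lambda}(\rhoin,\rhoout)} - \big(E_n(\gintilde_{m,\hat\lambda}) - E(\gintilde_{m,\hat\lambda})\big) - \big(E(\ftilde_{n,\hat\lambda}) - E_n(\ftilde_{n,\hat\lambda})\big).
\]
The last term is the generalization gap of $\ftilde_{n,\hat\lambda}\in\F_{\hat\lambda}$ on its own $n$ training samples; I would control it by a one-sided uniform-convergence bound over $\F_{\hat\lambda}$, using the $\beta$-Lipschitz contraction to pass from the loss class to $\F_{\hat\lambda}$ and a McDiarmid/Hoeffding step to get $8\beta\mathcal{R}_n(\F_{\hat\lambda}) + 2B\sqrt{\log(1/\delta')/2n}$. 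Because $\hat\lambda$ is data-dependent, this must hold for the \emph{selected} class, which I would secure by a union bound over $\Lambda$; this is the one additional failure event that turns the $L+1$ of Theorem~\ref{thm:ehpo-bnd-nmd} into $L+2$ and the coefficient $2B\sqrt{\log(2(L+1)/\delta)}$ into $B' = 2B\sqrt{\log(2(L+2)/\delta)}$. The middle term $E_n(\gintilde_{m,\hat\lambda})-E(\gintilde_{m,\hat\lambda})$ I would control by re-using the concentration events already underlying Theorem~\ref{thm:ehpo-bnd-nmd} so that it contributes no fresh $m$- or $\mu$-dependent terms, yielding the first certificate $\ehin \ge \mathcal{I}_{n,m}^{\hat\lambda}(\rhoin,\rhoout) - 8\beta\mathcal{R}_n(\F_{\hat\lambda}) - \dots$, i.e. the first argument of the $\max$ in $\bar{\mathcal I}$.

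The second argument of the $\max$ is a more robust certificate for the regime where the inner/outer ERM tolerances are loose and the observed improvement $\mathcal{I}_{n,m}^{\hat\lambda}(\rhoin,\rhoout)$ understates what exact minimization achieves. Here I would re-express the improvement through the exact ERM solutions, $\mathcal{I}_{n,m}^{\hat\lambda}(0,0) = E_n(\ginhat_{m,\hat\lambda}) - E_n(\fhat_{n,\hat\lambda})$, paying $\rhoout$ to pass from $\fhat_{n,\hat\lambda}$ to $\ftilde_{n,\hat\lambda}$ via the defining tolerance in \eqref{eq:ftilde}, and paying an additive $(\nicefrac{\mu}{n})B$ to account crudely (bounding the loss by $B$) for the $\mu$ validation points that enter the $n$-sample score of $\gintilde_{m,\hat\lambda}$ but were absent from its $m$-sample training. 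This gives the lower bound $\ehin \ge \mathcal{I}_{n,m}^{\hat\lambda}(0,0) - \rhoout - (\nicefrac{\mu}{n})B - 8\beta\mathcal{R}_n(\F_{\hat\lambda}) - \dots$. Since both certificates hold simultaneously on the good event, I may keep the larger one, which is precisely $\bar{\mathcal I}$.

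Finally I would intersect this single new uniform-convergence event with the event of Theorem~\ref{thm:ehpo-bnd-nmd}, so that the $\min_\lambda\{8\beta\mathcal{R}_m(\F_\lambda) + \eapp(\lambda)\}$, the $\rhoin$, and the $\nicefrac{2}{\sqrt{2m}},\nicefrac{1}{\sqrt{2\mu}}$ concentration terms are inherited unchanged up to the enlarged $\log(2(L+2)/\delta)$, then add the $8\beta\mathcal{R}_n(\F_{\hat\lambda})$, $\nicefrac{2}{\sqrt{2n}}$, and $-\bar{\mathcal I}$ contributions from the hold-in step to recover \eqref{eq:excess-risk-bnd-md}. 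I expect the main obstacle to be exactly the middle term $E_n(\gintilde_{m,\hat\lambda})-E(\gintilde_{m,\hat\lambda})$: because the final retraining reuses the held-out split, $\gintilde_{m,\hat\lambda}$ is correlated with the very samples on which it is scored, so it cannot be treated as a fixed function, and it is this overlap (together with the tolerance $\rhoout$) that forces the two-case construction of $\bar{\mathcal I}$ and the apparently loose $(\nicefrac{\mu}{n})B$ slack in place of a clean $\nicefrac{1}{\sqrt{\mu}}$ term.
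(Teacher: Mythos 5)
Your overall architecture coincides with the paper's: bound $E(\ftilde_{n,\hat\lambda})$ by passing to $E_n(\ftilde_{n,\hat\lambda})$ via a uniform deviation bound over $\F_{\hat\lambda}$ on $n$ samples, relate $E_n(\ftilde_{n,\hat\lambda})$ to $E_n(\gintilde_{m,\hat\lambda})$ through the two deterministic certificates whose maximum is $\bar{\mathcal I}$ (your account of where $\rhoout$ and the $(\nicefrac{\mu}{n})B$ slack enter --- via the tolerance in \eqref{eq:ftilde} and via $E_n(f) = (\nicefrac{m}{n})E_m(f) + (\nicefrac{\mu}{n})E_\mu(f)$ with the loss bounded by $B$ --- is exactly the paper's chain), and then splice in the bound of Theorem~\ref{thm:ehpo-bnd-nmd}. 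Two of your execution choices, however, would not land on \eqref{eq:excess-risk-bnd-md} as stated.

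First, the middle term $E_n(\gintilde_{m,\hat\lambda}) - E(\gintilde_{m,\hat\lambda})$. You propose to absorb it by ``re-using the concentration events already underlying Theorem~\ref{thm:ehpo-bnd-nmd} so that it contributes no fresh $m$- or $\mu$-dependent terms.'' Those events control $E_m$ and $E^v_\mu$ separately, so recombining them through $E_n = (\nicefrac{m}{n})E_m + (\nicefrac{\mu}{n})E_\mu$ yields terms of the form $(\nicefrac{m}{n})\,4\beta\mathcal R_m(\F_{\hat\lambda}) + \cdots$, a differently shaped bound. The paper instead applies the $n$-sample uniform bound over $\F_{\hat\lambda}$ a \emph{second} time, now to $\gintilde_{m,\hat\lambda} \in \F_{\hat\lambda}$; this is precisely what doubles the complexity term to $8\beta\mathcal R_n(\F_{\hat\lambda})$ and produces the $B'(\nicefrac{2}{\sqrt{2n}})$ coefficient, and it is also what dissolves the correlation worry you raise at the end: the supremum over $\F_{\hat\lambda}$ makes the dependence of $\gintilde_{m,\hat\lambda}$ on the $n$ points irrelevant, whereas the two-case $\bar{\mathcal I}$ and the $(\nicefrac{\mu}{n})B$ slack are purely deterministic empirical-risk accounting, not a fix for that correlation. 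Second, your union-bound arithmetic does not close: a genuine union bound over $\Lambda$ for the new $n$-sample event costs $L$ (or $2L$) additional failure events, not one, so it cannot turn $\log(2(L+1)/\delta)$ into $\log(2(L+2)/\delta)$. The paper charges exactly two extra events for the two applications of the uniform bound on $\F_{\hat\lambda}$ (one for $\ftilde_{n,\hat\lambda}$, one for $\gintilde_{m,\hat\lambda}$), giving $2 + 2L + 2 = 2(L+2)$ events of weight $\delta' = \delta/2(L+2)$. Neither issue breaks the strategy, but both must be repaired to recover the theorem's exact constants.
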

The proof is provided in Appendix~\ref{asec:proofs:md}. This result indicates
that we are only able to recover the hold-in risk $\ehin$ in terms of the excess
risk if the empirical risk improvement $\mathcal{I}_{n,m}^{\hat\lambda}(\rhoin,
\rhoout)$ is relatively significant. While
$\mathcal{I}_{n,m}^{\hat\lambda}(0,0)$ is always $\geq 0$ by definition of the
ERM solution $\fhat_{n,\hat\lambda}$, the quantity
$\mathcal{I}_{n,m}^{\hat\lambda}(\rhoin, \rhoout)$ depends more closely to
$\rhoout$ and, we should make $\rhoout$ small enough to extract any gain from
retraining the final $\ftilde_{n,\hat\lambda}$ on all the $n$ samples --
$\rhoout$ should not exceed a critical point where the
$\mathcal{I}_{n,m}^{\hat\lambda}(\rhoin, \rhoout)$ term is of the same order as
the $8 \beta\mathcal{R}_n(\F_{\hat\lambda})$ and the
$B'(\nicefrac{2}{\sqrt{2n}})$ terms. We cannot compute this critical point, but
we will discuss how we can select $\rhoout$ in a data-driven way. Comparing
Theorems~\ref{thm:ehpo-bnd-md} to \ref{thm:ehpo-bnd-nmd} allows us to provide a
theoretical answer to \hyperlink{q1a}{\textsf{Q1a}}.
In HPO, a critical choice is the value of $\rhoin$, properly balancing the
computational {\em and} generalization aspects. Theorems~\ref{thm:ehpo-bnd-nmd}
and \ref{thm:ehpo-bnd-md} highlight the roles of these ERM approximation
tolerances $\rhoin$ and $\rhoout$, providing an answer for
\hyperlink{q2a}{\textsf{Q2a}}.

\begin{table}[tb]
\caption{Tradeoffs of the terms in the excess risk \eqref{eq:decomp-hpo-nmd} \&
  \eqref{eq:decomp-hpo-md} based on the various choices in HPO. We use
  $\F_{\text{\sf all}} = \cup_{\lambda \in \Lambda} \F_\lambda$ to denote the
  union of all function classes. `\ua' denotes an increase while `\da' a
  decrease. }
\label{tab:tradeoffs}
\begin{center}
{
\begin{tabular}{lccccccc}
\toprule
& $\F_{\text{\sf all}}$ \ua & $|\Lambda|$ \ua
  & $n$ \ua & $\mu$ \ua & $m$ \ua &  $\rhoin$ \ua & $\rhoout$ \ua \\
\midrule
$\eapp$ & \da & \da & & & \\
$\eest$ & \ua &       & \da & \ua & \da & \\
$\eopt$ & & & & & & \ua & \\
\midrule
$\emcm$ & & \ua & \da & \da &  &  & \\
$\ehin$ &  &  & \da & \ua & \da &  & \ua \\
\bottomrule
\end{tabular}
}
\end{center}
\end{table}

These results also allow us to conceptually understand the tradeoffs better the
different terms in the excess risk. A visualization of these tradeoffs is
presented in Table~\ref{tab:tradeoffs}, highlighting the effect of the
individual parameters of the problems ($n$, $m$, $\mu$, $|\Lambda|$, etc) on the
different terms in the excess risk. For example, HPO over a large set of HPs
$\Lambda$ will potentially reduce the approximation risk $\eapp$, but might
increase the model class mis-specification risk $\emcm$. Or increasing the
validation set size $\mu$ would reduce $\emcm$ but increasing $\mu$ implies
reduced $m$ (for fixed $n$), leading to higher estimation risk $\eest$. We
believe that understanding these tradeoffs explicitly will allow us to obtain
better generalization with HPO.

\subsection{Data-driven heuristics}
In certain situations, $\rhoin$ and $\rhoout$ are specified outside our control
(for example, with models learned with techniques {\em other than} gradient
descent like decision tree), and hence it is not clear which learned model,
$\gintilde_{m,\hat\lambda}$ or $\ftilde_{n,\hat\lambda}$, is better to
deploy. For this, we present a heuristic to make a data-driven choice between
the two. In practical scenarios, the approximation risk $\eapp$ usually
dominates the excess risk. Hence, if we assume that $\min_\lambda \{8 \beta
\mathcal{R}_{m}(\F_\lambda) + \eapp(\lambda)\}$ dominates $8 \beta
\mathcal{R}_n(\F_{\hat\lambda})$ (the latter does not have $\eapp$ term and $m <
n$) then we can compare the excess risk bounds of $\gintilde_{m,\hat\lambda}$
and $\ftilde_{n,\hat\lambda}$ and utilize the following heuristic (note that
$\log 2(L+1) \approx \log 2(L+2)$ except from really small $L$), providing a
data-driven answer to \hyperlink{q1b}{\textsf{Q1b}}:
\begin{heuristic}\label{heu:md-vs-nmd}
Based on the quantities in Theorems \ref{thm:ehpo-bnd-nmd} \&
\ref{thm:ehpo-bnd-md}
and $\delta>0$, we select $\ftilde_{n, \hat\lambda}$ if
$\mathcal{I}_{n,m}^{\hat\lambda} > 2B\sqrt{2
  \log(\nicefrac{2(L+2)}{\delta})/n}$, else we select $\gintilde_{m,
  \hat\lambda}$.
\end{heuristic}

To answer \hyperlink{q2b}{\textsf{Q2b}}, we need a way to make an informed
choice in terms of $\rhoin$ and $\rhoout$.  Intuitively, we can compare $\rhoin$
to the other {\em computable} terms in the bounds \eqref{eq:excess-risk-bnd-nmd}
and \eqref{eq:excess-risk-bnd-md}
-- if $\rhoin$ is of the order of these terms or larger, the $\eopt$ risk will
contribute significantly to the excess risk; if $\rhoin$ is an order of
magnitude smaller that this term, then the $\emcm$ risk will dominate the
$\eopt$ and any further reduction in $\rhoin$ is not beneficial. Based on this
observation, we propose another heuristic for HPO with approximate ERM
to facilitate the choice of $\rhoin$ in the inner ERM of the bilevel
problem~\eqref{eq:hpo-cv}:
\begin{heuristic} \label{heu:data-dep-rhoin}
Based on the quantities in Theorem~\ref{thm:ehpo-bnd-nmd} and a scaling
parameter $\gamma > 0$, we set
%
\begin{equation}\label{eq:data-dep-rhoin}
\rhoin = \gamma \, B
\sqrt{2\log(\nicefrac{2(L+1)}{\delta})} (\nicefrac{2}{\sqrt{m}} +
\nicefrac{1}{\sqrt{\mu}}).
\end{equation}
%
\end{heuristic}
A larger $\gamma$ will imply a more computationally efficient HPO, while a
smaller $\gamma$ will improve excess risk up until a point. As we will
demonstrate in our experiments, $\gamma \sim 0.1$ is sufficiently
small\footnote{This $\gamma$ implies that the optimization risk $\eopt$ is an
  order of magnitude smaller than atleast some other term in the excess risk.}
such that we still gain computational efficiency via the ERM approximation but
not see any adverse effect on the excess risk of $\gintilde_{m,\hat\lambda}$.

While we have a very precise way of setting $\rhoin$ given the theoretical
result, the choice of $\rhoout$ is more involved. We present this in
Appendix~\ref{asec:heu-rhoout}. Note that the choice of $\rhoout$ does not play
a significant role in the computational cost of the HPO compared to
$\rhoin$ since $\rhoin$ is involved in the training for each $\lambda \in
\Lambda$ while $\rhoout$ only influences the final training of $\ftilde_{n, \hat
  \lambda}$. For this reason, {\bf if possible}, $\rhoout$ is chosen to be
significantly smaller than $\rhoin$. Heuristics~\ref{heu:data-dep-rhoin} and
\ref{heu:data-dep-rhoout} (Appendix~\ref{asec:heu-rhoout}) are our answers to
\hyperlink{q2b}{\textsf{Q2b}}.
\subsection{Empirical evaluation}\label{sec:emp}
\begin{figure*}[t]
\centering
\begin{subfigure}{0.8\textwidth}
  \centering
  \includegraphics[width=\textwidth]{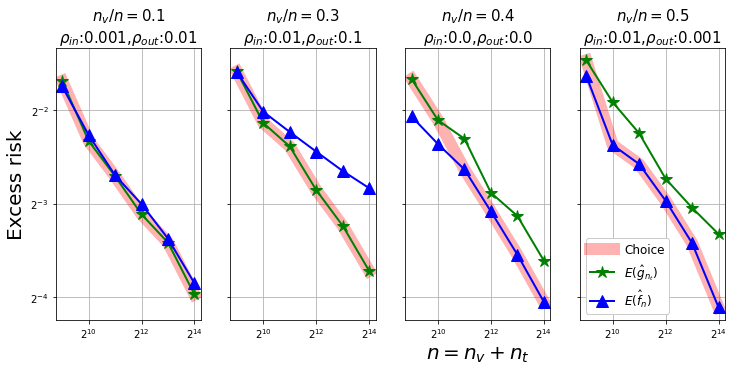}
  \caption{HPO with $|\Lambda| = 36$}
  \label{fig:choice-hpo1}
\end{subfigure}
~
\begin{subfigure}{0.8\textwidth}
  \centering
  \includegraphics[width=\textwidth]{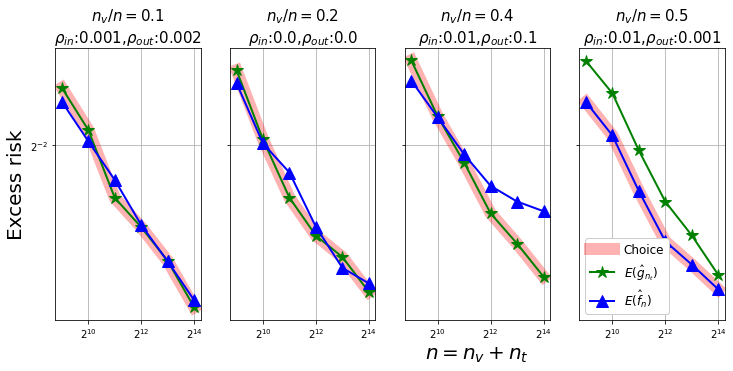}
  \caption{HPO with $|\Lambda| = 18$}
  \label{fig:choice-hpo2}
\end{subfigure}
\caption{Empirical utility of Heuristic~\ref{heu:md-vs-nmd} for data dependent
  choice of $\ftilde_{n,\hat\lambda}$ vs $\gintilde_{m,\hat\lambda}$ with a
  subset of the varying values of $n, \nicefrac{\mu}{n}, \rhoin, \rhoout$. The
  vertical axis is the excess risk ({\em lower is better}) and the horizontal
  axis is the total number of available samples $n$. Note the logscale on both
  axes.  See Appendix~\ref{asec:eval:md-v-nmd} for results on all combinations.}
\label{fig:choice}
\end{figure*}
We evaluate our heuristics on HPO with neural network configurations. We
consider neural networks to have better control over $\rhoin$ in the ERM.
We chose a synthetic data distribution to have control over the experiment and
to be able to generate fresh large samples to accurately estimate the true risks
of the different models (as opposed to our results in \S \ref{sec:probset:emp}
which were based on the true risk estimated on a limited test set). This is
common practice when empirically studying theoretical bounds on various
statistical quantities (see for example \citep{rodriguez2009sensitivity}). This
also allows us to perform the empirical evaluation under various setting (such
as different $n$, $\mu$, $\Lambda$). We set the Bayes optimal risk $E(\fstar) =
0$. We consider two HPO problems with grid-search: (a) one with 36 HP
configurations ($L = |\Lambda| = 36$), and (b) another with 18 HP configurations
($L = |\Lambda| = 18$). The data generation and the HP search spaces are
detailed in Appendix~\ref{asec:emp}.
We estimate the true risk $E(f)$ of any model $f$ with a separate large test
sample from the synthetic data distribution.  We consider sample sizes $n \in
[2^9, 2^{14}]$ and different values for $\nicefrac{\mu}{n} \in [0.1, 0.5]$ with
$m$ set to $(n - \mu)$. Each ERM involves 5 restarts and the results are
averaged over 10 trials (corresponding to different samples from the same
distribution). We set the failure probability $\delta = 0.05$.

We first evaluate the practical utility of Heuristic~\ref{heu:md-vs-nmd}, which
tries to balance the gain from utilizing the full data for obtaining
$\ftilde_{n,\hat\lambda}$ and the associated statistical cost of an additional
ERM.  Figure~\ref{fig:choice} compares this ``Choice'' based on
Heuristic~\ref{heu:md-vs-nmd} (thick \textcolor{red}{translucent red line}) to
$\ftilde_{n,\hat\lambda}$ \& $\gintilde_{m,\hat\lambda}$ (solid
\textcolor{blue}{blue $\blacktriangle$} \& \textcolor{Green}{green $\star$}
respectively) for a subset of the combinations of $\rhoin$, $\rhoout$ and
$\nicefrac{\mu}{n}$, showing the excess risk on the vertical axis as the number
of samples $n$ is increased. The results indicate that, depending on $\Lambda$,
$\rhoin$, $\rhoout$ and $\nicefrac{\mu}{n}$, $\gintilde_{m,\hat\lambda}$ might
be preferable to $\ftilde_{n,\hat\lambda}$ and vice versa -- one is not always
better than the other (as we also highlighted in \S \ref{sec:probset:emp}), and
always selecting $\ftilde_{n,\hat\lambda}$ (as done in practice) leaves room for
improvement. In both types of cases, Heuristic~\ref{heu:md-vs-nmd} is able to
select the better options in many cases
-- {\em the proposed heuristic provides a data-driven way of selecting between
  $\ftilde_{n, \hat\lambda}$ and $\gintilde_{m, \hat\lambda}$}. We provide the
full set of results for all considered values of $\rhoin$, $\rhoout$,
$\nicefrac{\mu}{n}$ for both HPO problems in Appendix~\ref{asec:eval:md-v-nmd}.
There are cases where the heuristic does not make the right choice, which
indicates that there is room for improvement.

\begin{figure*}[t]
\centering
\begin{subfigure}{0.9\textwidth}
  \centering
  \includegraphics[width=\textwidth]{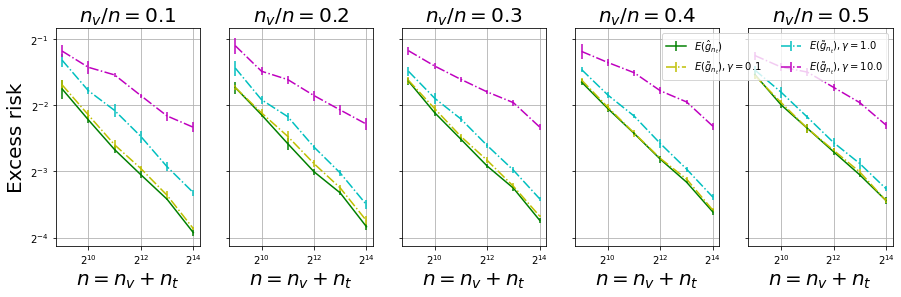}
  \caption{Excess-risk from Heuristic~\ref{heu:data-dep-rhoin}.}
  \label{fig:er-rho}
\end{subfigure}
~
\begin{subfigure}{0.9\textwidth}
  \centering
  \includegraphics[width=\textwidth]{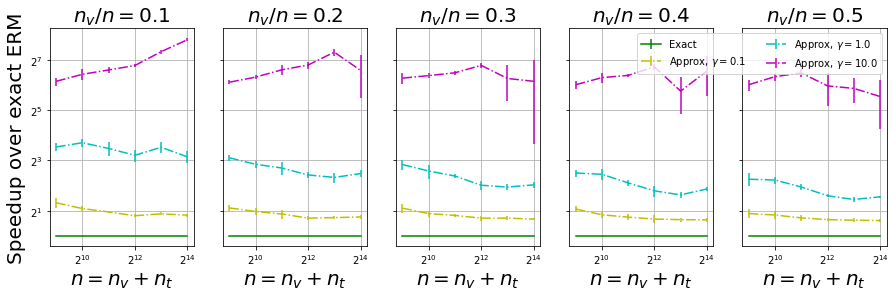}
  \caption{Speedup from Heuristic~\ref{heu:data-dep-rhoin}.}
  \label{fig:speedup-rho}
\end{subfigure}
\caption{Empirical validation of the utility of
  Heuristic~\ref{heu:data-dep-rhoin} for data-dependent choice of $\rhoin$}
\label{fig:large-scale}
\end{figure*}

To demonstrate the practical utility of the proposed
Heuristic~\ref{heu:data-dep-rhoin}, we continue with the aforementioned HP
selection problem over 36 neural network configurations on a synthetic
classification data.  We consider three choices $\gamma \in \{0.1, 1, 10\}$ in
Heuristic~\ref{heu:data-dep-rhoin}. We set $\delta = 0.05$ and consider
different values of $n$ and $\nicefrac{\mu}{n}$. Figure~\ref{fig:large-scale}
compares the performance of the HPO with exact ERM using $\ginhat_{m, \lambda},
\lambda \in \Lambda$ to the HPO with approximate ERM for different $\rhoin$
using $\gintilde_{m, \lambda}$ instead.
In Figure~\ref{fig:er-rho}, we compare the excess risk incurred from approximate
ERM with the data dependent choice of $\rhoin$ compared to exact ERM. We see
that $\gamma = 0.1$ leads to a sufficiently small $\rhoin$ that matches the
predictive performance of exact ERM. Any smaller approximation $\rhoin$ would
not improve the excess risk. The results also indicate that $\gamma = 10$ leads
to a $\rhoin$ where the optimization error dominates the excess risk, implying
that $\rhoin$ should be reduced if possible. Figure~\ref{fig:speedup-rho}
presents the computational speedups obtained for the corresponding
data-dependent choices of $\rhoin$ -- we see that we can get a $2\times$ speedup
over exact ERM without any degradation in excess risk with $\gamma = 0.1$ while
obtaining around $4 - 8\times$ speedup with slight degradation in performance
with $\gamma = 1$. These results provide empirical evidence for the practical
utility of the proposed Heuristic~\ref{heu:data-dep-rhoin} obtained from
Theorem~\ref{thm:ehpo-bnd-nmd} -- {\em the proposed heuristic provides a data
  driven way of setting the ERM approximation tolerance in HPO}.
\section{Related work}\label{sec:litreview}
While the use of smaller number of samples $m < n$ to select the HP
$\hat\lambda$ is often recognized in practice, and leads to the final model
being learned via ERM on $\F_{\hat\lambda}$ using all $n$ samples, {\em no
  theoretical guarantees exist for this process}. We explicitly study this
situation, introducing the hold-in risk $\ehin$, and provide a novel guarantee
for such a procedure in
Theorem~\ref{thm:ehpo-bnd-md}. Kearns~\citep{kearns1996bound} studies the
interaction between the approximation and estimation risks, and under certain
assumptions and restricted class of functions, proposes ways for selecting the
sizes of the training and held-out splits $m$ and $\mu$ in an informed
manner. To account for the fact that some of the training data is ``wasted'' as
the held-out set, Blum et al.~\citep{blum1999beating} propose two different ways
of retaining the Hoeffding bounds of the error estimate on the held-out set
while still being able to utilize the full training data to train models
employed at test time. These techniques are ways of modifying the standard CV.

In addition to the above, there are various theoretical analyses focusing on
various aspects of the CV process such as obtaining tight variance estimates for
the $k$-fold CV score of any given HP $\lambda \in
\Lambda$~\citep{nadeau1999inference, nadeau2003inference,
  rodriguez2009sensitivity, markatou2005analysis}. These results are
complementary to ours and could be used to extend our current results (for the
single training/validation split based CV) to $k$-fold CV, with the variance
estimates for the $k$-fold CV metrics involved in the data-driven heuristics. We
will pursue this in future research. However, note that these existing results
do not directly help us obtain tighter excess risk bounds or allow comparison
between models $\gintilde_{m, \hat\lambda}$ (used during the HPO) and the final
deployed model $\ftilde_{n, \hat\lambda}$ or provide any intuition regarding the
choices for $\rhoin$ and $\rhoout$, which are the main questions
(\hyperlink{q1a}{\textsf{Q1a}}, \hyperlink{q1b}{\textsf{Q1b}},
\hyperlink{q2a}{\textsf{Q2a}}, \hyperlink{q2b}{\textsf{Q2b}}) we study.

Finally, as we discussed in \S \ref{sec:intro}, HPO has been widely studied over
the last decade. However, the questions we focus on are complementary to any
specific HPO scheme. We do not focus on how the HP was found (with any specific
HPO scheme such Bayesian Optimization~\citep{shahriari2016taking}), but rather
on (i)~the ERM involved in the evaluation of any HP during the HPO, and (ii)~the
ERM involved in the final deployment after an HP is selected. Our
Heuristic~\ref{heu:data-dep-rhoin} allows us to speed up {\em any} HPO scheme
without any additional excess risk, and our Heuristics~\ref{heu:md-vs-nmd} and
\ref{heu:data-dep-rhoout} allow us to improve the predictive performance of the
deployed model for the HP $\hat\lambda$ selected via {\em any} HPO
scheme. ``Multi-fidelity'' HPO schemes~\citep{li2018hyperband, jamieson2016non,
  sabharwal2016selecting, klein2017fast, falkner2018bohb} significantly improve
the computational efficiency by adaptively setting either the training set size
$m < n$ or the optimization approximation $\rhoin$ on a per-HP basis instead of
using a single value of $m$ or $\rhoin$ for all $\lambda \in \Lambda$. This is
quite different from our proposed Heuristic~\ref{heu:data-dep-rhoin} and the
excess risk introduced by this adaptive strategy is not studied to the best of
our knowledge.  We wish to extend our tradeoff analysis to multi-fidelity HPO in
future work. Meta-learning is another way of improving the efficiency of the HPO
process~\citep{vanschoren2018meta}, and has been used in some AutoML
toolkits~\citep{feurer2015efficient, feurer2020auto}. Recently, some theoretical
guarantees have been established for such meta-learning based
HPO~\citep{ram2022optimality}, and we also wish to extend our tradeoff analysis
to such meta-learning based HPO.  Note that the proposed
Heuristics~\ref{heu:md-vs-nmd} and \ref{heu:data-dep-rhoout} is still beneficial
in both the above situations (multi-fidelity and meta-learning).
\section{Conclusions} \label{sec:conc}
Our contributions focus on aspects of CV based HPO -- we explore how to leverage
the different tradeoffs in the excess risk to make various practical decisions
in the HPO process in a data-driven manner. We use the novel excess risk
decomposition and theoretical analyses to answer the two questions in CV-based
HPO: (1) When is the process of training the model on all the data after the HPO
beneficial and can we choose between the two in a data driven manner?
(\hyperlink{q1a}{\textsf{Q1a}}, \hyperlink{q1b}{\textsf{Q1b}}) (2) At what level
should we set the tolerance of the optimization involved in model training
during HPO?  (\hyperlink{q2a}{\textsf{Q2a}}, \hyperlink{q2b}{\textsf{Q2b}}).
The ideas can be utilized by data science practitioners as well as by automated
data science systems.
\bibliographystyle{plainnat}
\bibliography{refs}
\clearpage
%
\appendix
\section{Detailed Proofs} \label{asec:proofs}
We will make sure of this standard result:
\begin{theorem} \label{athm:max-emp-true-risk-diff-rc} (\citep{bartlett2002rademacher})
Consider conditions and notations of Theorem~\ref{thm:ehpo-bnd-nmd}.  Then for
any $\delta > 0$, with probability at least $1 - \delta$, the following is true:
{%
\begin{equation} \label{eq:max-err-disc}
\sup_{f \in \F_\lambda} \left | E_n(f) - E(f) \right |
  \leq
     4 \, \beta \, \mathcal R_n(\F_\lambda)
     + 2 \, B \, \sqrt{\nicefrac{\log (1/\delta)}{2 n}}.
\end{equation}}
where $\mathcal R_n (\F_\lambda)$ is the Radamacher complexity of $\F_\lambda$.
\end{theorem}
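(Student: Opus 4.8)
The plan is to establish the two-sided uniform deviation bound by the standard three-step route: bounded-differences concentration, symmetrization, and Lipschitz contraction. First I would define the data-dependent quantity $\Phi(\dset) := \sup_{f \in \F_\lambda} | E_n(f) - E(f) |$ as a function of the sample $\dset = \{(x_i, y_i)\}_{i=1}^n$, and verify that it has the bounded-differences property: since the loss is $B$-bounded (range at most $2B$), replacing any single $(x_i, y_i)$ changes the empirical average $E_n(f)$ by at most $\nicefrac{2B}{n}$ uniformly in $f$, hence changes $\Phi$ by at most $\nicefrac{2B}{n}$. Applying McDiarmid's inequality with these increments gives, with probability at least $1 - \delta$, that $\Phi(\dset) \leq \E[\Phi(\dset)] + 2B\sqrt{\nicefrac{\log(1/\delta)}{2n}}$, which already supplies the second term on the right-hand side of \eqref{eq:max-err-disc}.

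The next step is to control the expectation $\E[\Phi(\dset)]$ by symmetrization. Introducing an independent ghost sample and Rademacher signs $\sigma_i \in \{\pm 1\}$, the standard argument (Jensen to pass the ghost-sample expectation inside the supremum, then inserting the $\sigma_i$ since the summands are symmetric under sign flips) yields $\E[\Phi(\dset)] \leq 2\, \E_{\dset, \sigma}\big[\sup_{f \in \F_\lambda} | \tfrac{1}{n}\sum_{i=1}^n \sigma_i\, \ell(y_i, f(x_i)) |\big]$, i.e. the (absolute) Rademacher complexity of the composed loss class $\ell \circ \F_\lambda := \{(x,y) \mapsto \ell(y, f(x)) : f \in \F_\lambda\}$. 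Finally I would strip off the loss via the Ledoux--Talagrand contraction principle: since $\ell$ is $\beta$-Lipschitz in its second argument, each coordinate map $t \mapsto \ell(y_i, t)$ is $\beta$-Lipschitz, and contraction transfers the Rademacher average of $\ell \circ \F_\lambda$ to that of $\F_\lambda$ at the cost of the factor $\beta$; accounting for the absolute value / non-centered coordinate maps contributes the remaining factor of two, so that $2\, \E[\mathcal R_n(\ell \circ \F_\lambda)] \leq 4\beta\, \mathcal R_n(\F_\lambda)$. Combining this with the concentration step gives exactly \eqref{eq:max-err-disc}.

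The main obstacle is the contraction step, the concentration and symmetrization being routine. The delicacy is that the per-coordinate maps $t \mapsto \ell(y_i, t)$ are sample-dependent and are not assumed to vanish at the origin, and the supremum is taken over the absolute value $|\cdot|$; the cleanest route is to invoke the absolute-value form of the Ledoux--Talagrand inequality (or to first split the absolute supremum into its two one-sided parts), tracking the constant carefully so that the coefficient comes out to precisely $4\beta$ and the boundedness constant $2B$ matches. Since the statement is quoted from \citep{bartlett2002rademacher}, I would cite that source for the exact form of the contraction lemma and restrict the proof to verifying that our boundedness ($|\ell| \leq B$) and Lipschitz ($\beta$) constants instantiate their statement with the stated numerical factors.
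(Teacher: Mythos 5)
Your proposal is correct and coincides with the approach behind the stated result: the paper does not reprove this theorem, quoting it directly from \citep{bartlett2002rademacher}, and the McDiarmid--symmetrization--contraction route you outline is exactly the proof in that reference, with your bounded-differences increment $\nicefrac{2B}{n}$ producing the $2B\sqrt{\nicefrac{\log(1/\delta)}{2n}}$ term and the coefficient $4\beta$ arising as the factor $2$ from symmetrization times the factor $2\beta$ from the absolute-value (Ledoux--Talagrand) contraction. You also correctly isolate the only delicate point---the sample-dependent, non-centered coordinate maps $t \mapsto \ell(y_i, t)$ in the contraction step---and handling it by invoking the contraction lemma in the exact form given in the cited source is precisely what the paper's citation-based treatment presumes.
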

\subsection{Proof of Theorem~\ref{thm:ehpo-bnd-nmd}}\label{asec:proofs:nmd}
\begin{proof}(Theorem~\ref{thm:ehpo-bnd-nmd})
By definition of $\F_{\hat\lambda}$, for any $\lambda \in \Lambda$ and $\delta'
> 0$, we have the following relationships:
{
\begin{align}
E(\gintilde_{m,\hat\lambda}) & \stackrel{(a)}{\leq}
  E^v_{\mu}(\gintilde_{m,\hat\lambda})
  + B\sqrt{\log(1/\delta')/2\mu}
  \qquad \text{ w.p. } \geq 1-\delta'
  \\
  & \stackrel{(b)}{\leq}
  E^v_{\mu}(\gintilde_{m,\lambda})
  + B\sqrt{\log(1/\delta')/2\mu}
  \\
  & \stackrel{(c)}{\leq}
  E(\gintilde_{m,\lambda})
  + 2B\sqrt{\log(1/\delta')/2\mu}
  \qquad \text{ w.p. } \geq 1-\delta',
  \label{aeq:link1-1}
\end{align}}
where $(a)$ and $(c)$ are obtained from an application of Hoeffding's inequality
while $(b)$ is obtained from the definition of $\hat\lambda$ in
\eqref{eq:hpo-cv} as the minimizer of $E_{\mu}^v(\gintilde_{m,\lambda})$.  For
any $\F_\lambda, \lambda \in \Lambda$, we have the following:
{
\begin{align}
E(\gintilde_{m,\lambda}) & \stackrel{(d)}{\leq}
  E_{m}(\gintilde_{m,\lambda})
  + 4 \beta \mathcal R_{m}(\F_\lambda)
  + 2 B \sqrt{\log(1/\delta')/2m}
  \qquad \text{ w.p. } \geq 1-\delta'
  \nonumber
  \\
  & \stackrel{(e)}{\leq}
  E_{m}(\ginhat_{m,\lambda}) + \rhoin
  + 4 \beta \mathcal R_{m}(\F_\lambda)
  + 2 B \sqrt{\log(1/\delta')/2m}
  \nonumber
  \\
  & \stackrel{(f)}{\leq}
  E_{m}(\fbar_{\lambda}) + \rhoin
  + 4 \beta \mathcal R_{m}(\F_\lambda)
  + 2 B \sqrt{\log(1/\delta')/2m}
  \nonumber
  \\
  & \stackrel{(g)}{\leq}
  E(\fbar_{\lambda}) + \rhoin
  + 8 \beta \mathcal R_{m}(\F_\lambda)
  + 4 B \sqrt{\log(1/\delta')/2m}
  \qquad \text{ w.p. } \geq 1-\delta'
  \nonumber
  \\
  & \stackrel{(h)}{\leq}
  E(\fstar) + \eapp(\lambda) + \rhoin
  + 8 \beta \mathcal R_{m}(\F_\lambda)
  + 4 B \sqrt{\log(1/\delta')/2m},
  \label{aeq:link1-2}
\end{align}}%
where $(d)$ and $(g)$ are obtained by
Theorem~\ref{athm:max-emp-true-risk-diff-rc}, $(e)$ is obtained from the problem
definition \eqref{eq:hpo-cv} for the approximate ERM solution, $(f)$ is obtained
from the definition of $\ginhat_{m,\lambda}$ as the ERM solution for any class
$\F_\lambda$, and $(h)$ is a simple application of the definition of the
approximation risk $\eapp(\lambda)$ of any class $\F_\lambda$.

Since the above holds for any $\lambda \in \Lambda$, putting \eqref{aeq:link1-1}
and \eqref{aeq:link1-2} together using the union bound with $\delta' = \delta/(2
+ 2 |\Lambda|) = \delta / 2(L+1)$ and minimizing over $\lambda \in \Lambda$ gives
us the desired result in \eqref{eq:excess-risk-bnd-nmd} with a failure
probability of at most $\delta$.
\end{proof}
\subsection{Proof of Theorem~\ref{thm:ehpo-bnd-md}}\label{asec:proofs:md}
%
\begin{proof}(Theorem~\ref{thm:ehpo-bnd-md})
By definition of $\F_{\hat\lambda}$ and $\ftilde_{n,\hat\lambda} \in
\F_{\hat\lambda}$, we have the following with an application of
Theorem~\ref{athm:max-emp-true-risk-diff-rc} w.p. $\geq 1 - \delta'$:
{
\begin{align}
 E(\ftilde_{n,\hat\lambda}) \leq
  E_n(\ftilde_{n,\hat\lambda})
  + 4 \beta \mathcal R_{n}(\F_{\hat\lambda})
  + 2 B \sqrt{\log(1/\delta')/2n}
  \label{aeq:link2-1}
\end{align}}
Now from the problem definition and the definition of
$\mathcal{I}_{n,m}^{\hat\lambda}$:
{
\begin{align}
E_n(\ftilde_{n,\hat\lambda}) & =
  E_n(\gintilde_{m,\hat\lambda})
  -\mathcal{I}_{n,m}^{\hat\lambda}(\rhoin, \rhoout)
  \label{aeq:link2-2}
\end{align}
}%
However, we have an alternate relationship as follows:
{
\begin{align}
E_n(\ftilde_{n,\hat\lambda}) & \stackrel{(a)}{\leq}
  E_n(\fhat_{n,\hat\lambda}) + \rhoout \\
  & \stackrel{(b)}{=}
  E_n(\ginhat_{m,\hat\lambda})
  - \mathcal{I}_{n,m}^{\hat\lambda}(0,0)
  + \rhoout
  \\
  & \stackrel{(c)}{\leq}
  E_n(\gintilde_{m,\hat\lambda}) + \frac{\mu}{n}B
  - \mathcal{I}_{n,m}^{\hat\lambda}(0,0)
  + \rhoout,
  \label{aeq:link2-3}
\end{align}}
where $(a)$ is from the definition of the approximate ERM solution
$\ftilde_{n,\hat\lambda}$ and the exact ERM solution $\fhat_{n,\hat\lambda}$,
$(b)$ is obtained from the definition of
$\mathcal{I}_{n,m}^{\hat\lambda}(0,0)$ in the statement of the theorem, and
$(c)$ is from the definition of $\ginhat_{m,\hat\lambda}$ as the minimizer of
$E_{m}(\cdot)$ in $\F_{\hat\lambda}$ and the fact that, for any $f$, $E_n(f) =
(m/n) E_{m}(f) + (\mu/n) E_{\mu}(f)$ and the loss $\ell$ is bounded by $B$.

Putting \eqref{aeq:link2-2} and \eqref{aeq:link2-3} gives us:
{
\begin{equation*} \label{aeq:link2-4}
\begin{split}
  E_n(\ftilde_{n,\hat\lambda}) \leq
  E_n(\gintilde_{m,\hat\lambda})
  - \underbrace{
  \max \left\{
    \mathcal{I}_{n,m}^{\hat\lambda}(\rhoin, \rhoout),
    \quad
    \mathcal{I}_{n,m}^{\hat\lambda}(0,0)
    - \frac{\mu}{n}B
    - \rhoout
  \right\}
  }_{:= \bar{\mathcal I} \text{ in Theorem~\ref{thm:ehpo-bnd-md}}}.
\end{split}
\end{equation*}}

Using above in \eqref{aeq:link2-1}, we get the following relationships:
{
\begin{align}
E(\ftilde_{n,\hat\lambda}) & \leq
  E_n(\gintilde_{m,\hat\lambda}) - \bar{\mathcal I}
  + 4 \beta \mathcal R_{n}(\F_{\hat\lambda})
  + 2 B \sqrt{\log(1/\delta')/2n}
  \qquad \text{ w. p. } \geq 1 - \delta' \nonumber
  \\
  & \leq
  E(\gintilde_{m,\hat\lambda}) - \bar{\mathcal I}
  + 8 \beta \mathcal R_{n}(\F_{\hat\lambda})
  + 4 B \sqrt{\log(1/\delta')/2n}
  \label{aeq:link2-5}
  \qquad \text{ w. p. } \geq 1-2\delta', \nonumber
\end{align}}
where the last inequality is an application of
Theorem~\ref{athm:max-emp-true-risk-diff-rc} on $\gintilde_{n,\hat\lambda} \in
\F_{\hat\lambda}$.
Combining \eqref{aeq:link2-5} with \eqref{aeq:link1-1} (which holds w.p. $\geq 1
- 2\delta'$) and \eqref{aeq:link1-2} (which holds w.p. $\geq 1 - 2\delta'$ for
each $\lambda \in \Lambda$) and applying the union bound over all $\lambda \in
\Lambda$, we have
{
\begin{align}
&
E(\ftilde_{n,\hat\lambda}) - E(\gintilde_{m,\hat\lambda})
\nonumber \\
& \quad
\leq
- \bar{\mathcal I}
  + 8 \beta \mathcal R_{n}(\F_{\hat\lambda})
  + 4 B \sqrt{\nicefrac{\log(\nicefrac{1}{\delta'})}{2n}}
& = - \bar{\mathcal I}
  + 8 \beta \mathcal R_{n}(\F_{\hat\lambda})
  + B' (\nicefrac{2}{\sqrt{2n}})
  \quad \text{ w.p.} \geq 1-2\delta',
\end{align}}
We also know that for all $\lambda\not= \hat\lambda \in \Lambda$,
{
\begin{align}
E(\gintilde_{m,\hat\lambda}) -  E(\gintilde_{m,\lambda})
&  \leq 2B\sqrt{\nicefrac{\log(\nicefrac{1}{\delta'})}{2\mu}}
  = B' (\nicefrac{1}{\sqrt{2\mu}})
  \qquad \text{ w.p. } \geq 1-2\delta',
  \nonumber \\
E(\gintilde_{m,\lambda}) -  E(\fstar)
& \leq \min_{\lambda \in \Lambda} \left[ \eapp(\lambda) + \rhoin
    + 8 \beta \mathcal R_{m}(\F_\lambda) \right]
    + 4 B \sqrt{\frac{\log(\nicefrac{1}{\delta'})}{2m}}
  \nonumber \\
&  = \min_{\lambda \in \Lambda} \left[ \eapp(\lambda) + \rhoin
    + 8 \beta \mathcal R_{m}(\F_\lambda) \right]
    + B' \left(\frac{2}{\sqrt{2m}}\right) \nonumber
\end{align}}
w.p. $\geq 1 - 2L \delta'$ where we replace $2B \sqrt{\log(1/\delta')}$ with
$B'$.

Putting the above together, we get the upperbound for $\er =
E(\ftilde_{n,\hat\lambda}) - E(\fstar)$ in the statement of the claim with
probability at least $1 - (2 + 2L + 2) \delta' = 1 - 2(L+2)\delta' = 1 - \delta$
by setting $\delta' = \delta/(2(L+2)) $.
\end{proof}

\subsection{Tighter version of Theorem~\ref{thm:ehpo-bnd-nmd}.}
\begin{theorem}[Adapted from \citet{boucheron2005theory}, Thm 8.16]
\label{athm:ehpo-bnd-nmd}
Consider the conditions and notations of Theorem~\ref{thm:ehpo-bnd-nmd}. If we
further assume that there exists a non-decreasing function $w: \Real_+ \to
\Real_+$ with $w(x)/\sqrt{x}$ non-increasing for $x \in \Real_+$ such that, for
any function $f$, $\sqrt{\text{Var} \left[| f - \fstar |\right]} \leq w\left(
E(f) - E(\fstar)\right)$, then the excess risk of $\gintilde_{m,\hat\lambda}$
can be bounded from above as:
{
\begin{equation*}\label{aeq:excess-risk-bnd-nmd-2-v2}
\begin{split}
\er \leq &
  \min_{\theta \in (0, 1)} \left(1 + \theta \right)
    \left(
      \min_{\lambda \in \Lambda} \left\{
        8 \cdot \beta \cdot \mathcal R_n(\F_\lambda)
        + \eapp(\lambda)
      \right\}
      + \rhoin
    \right.\\
   & \qquad \qquad \qquad \quad \left.
      + 2 B \cdot \sqrt{\frac{2 \log((2L+1)/\delta)}{m}}
      + 4 \cdot B \cdot \log\frac{2L+1}{\delta} \left(
        \frac{\tau^*(\mu)}{\theta} + \frac{2}{3\mu}
      \right)
    \right),
\end{split}
\end{equation*}
}
where $\tau^*(n) = \min\{ x > 0: w(x) = x \sqrt{n} \}$ for any $n>0$.
\end{theorem}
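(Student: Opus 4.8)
The plan is to follow the same two-stage structure as the proof of Theorem~\ref{thm:ehpo-bnd-nmd}, keeping the training-side chain (steps $(d)$--$(h)$ there) essentially verbatim so that it again contributes the $\min_{\lambda}\{8\beta\,\mathcal R(\F_\lambda) + \eapp(\lambda)\}$, the $\rhoin$, and the $\nicefrac{1}{\sqrt{m}}$ terms, and replacing \emph{only} the held-out concentration (steps $(a)$ and $(c)$) by a variance-aware Bernstein bound. Hoeffding is suboptimal on the validation set precisely because it ignores the variance of the loss differences; under the stated condition $\sqrt{\mathrm{Var}[|f-\fstar|]} \le w(E(f)-E(\fstar))$ (translated to the loss via $\beta$-Lipschitzness, so that $\mathrm{Var}[\ell(y,f(x))-\ell(y,\fstar(x))] \le \beta^2 w^2(E(f)-E(\fstar))$), Bernstein's inequality replaces the $B\sqrt{\log(1/\delta')/2\mu}$ deviations by a term of order $w(Z_\lambda)\sqrt{\log(1/\delta')/\mu} + B\log(1/\delta')/\mu$, where $Z_\lambda := E(\gintilde_{m,\lambda}) - E(\fstar)$ is the true excess risk of each candidate.

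First I would fix, for each $\lambda \in \Lambda$, the centered variable $\ell(y,\gintilde_{m,\lambda}(x)) - \ell(y,\fstar(x))$ on the $\mu$ held-out samples and apply Bernstein's inequality together with the variance bound above; a union bound over the $L$ candidates and the extra comparison events produces the $\log((2L+1)/\delta)$ factor in place of $\log(2(L+1)/\delta)$. Combining these deviations with the optimality of $\hat\lambda$ for the empirical validation objective (the analogue of step $(b)$) yields a \emph{self-referential} inequality in the selected excess risk $Z_{\hat\lambda}$ of the form
\begin{equation*}
Z_{\hat\lambda} \le \Bigl(\text{training-side bound}\Bigr) + c_1\, w(Z_{\hat\lambda})\sqrt{\tfrac{\log((2L+1)/\delta)}{\mu}} + c_2\, \tfrac{B\log((2L+1)/\delta)}{\mu}.
\end{equation*}

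The crux of the argument --- and the step I expect to be the main obstacle --- is linearizing the $w(Z_{\hat\lambda})$ term so that $Z_{\hat\lambda}$ can be solved for explicitly, which is exactly where the fixed point $\tau^*(\mu)=\min\{x>0: w(x)=x\sqrt{\mu}\}$ and the free parameter $\theta$ enter. Using that $x \mapsto w(x)/\sqrt{x}$ is non-increasing, one shows the variance term is dominated (up to the $\log$ factor) by $\theta Z_{\hat\lambda} + \tau^*(\mu)/(4\theta)$ via a weighted AM--GM / Young's inequality at the scale set by $\tau^*$; moving the $\theta Z_{\hat\lambda}$ term to the left-hand side and rearranging is what generates the multiplicative $(1+\theta)$ prefactor and the $\tau^*(\mu)/\theta$ additive term (after the standard reparametrization of the free parameter). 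Getting the constants ($2B$, $4B$, and the $\tfrac{2}{3\mu}$ Bernstein remainder) and the monotonicity bookkeeping exactly right, while keeping the argument uniform over all $\lambda$ simultaneously, is the delicate part; this is precisely the content imported from \citet{boucheron2005theory}, Thm~8.16.

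Finally I would substitute the unchanged training-side bound for the ``training-side bound'' placeholder, minimize over $\lambda \in \Lambda$ and over $\theta \in (0,1)$, and collect the failure probabilities from the Bernstein union bound and from Theorem~\ref{athm:max-emp-true-risk-diff-rc} (applied on the $m$-sample side) into a single $\delta$ by the choice of $\delta'$, yielding the claimed bound.
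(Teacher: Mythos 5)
Your proposal follows essentially the same route as the paper's proof: Bernstein on the held-out set (in both directions, combined via the optimality of $\hat\lambda$) to obtain a self-referential inequality in the excess risk, linearization of $w(Z)$ through the fixed point $\tau^*(\mu)$ and the monotonicity of $w(x)/\sqrt{x}$ followed by AM--GM with parameter $\theta$, rearrangement to produce the $(1+\theta)$ prefactor, and the unchanged Rademacher-based training-side chain with a $\delta' = \delta/(2L+1)$ union bound. The only (cosmetic) divergence is your explicit $\beta$-Lipschitz translation of the variance condition to the loss differences, which the paper elides.
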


\begin{proof}
For any $\lambda \in \Lambda$ and some $\delta' > 0$, we have the following from
Berstein's inequality:
\begin{align}
E(\gintilde_{m,\lambda}) - E(\fstar) & \leq
  E_{\mu}^v(\gintilde_{m,\lambda}) - E_{\mu}^v(\fstar)
  + w(E(\gintilde_{m,\lambda}) - E(\fstar)) \sqrt{
    \frac{2 \log (1/\delta')}{\mu}
  }
  + \frac{4\log (1/\delta')}{3\mu}.
\end{align}
with probability at least $1 - \delta'$.

Let $\bar{\lambda} = \arg \min_{\lambda \in \Lambda}
E(\gintilde_{m,\lambda})$. Then again applying Berstein's inequality with
$\delta' > 0$, we have
\begin{align}
E(\fstar) - E(\gintilde_{m,\bar{\lambda}}) & \leq
  E_{\mu}^v(\fstar) - E_{\mu}^v(\gintilde_{m,\bar{\lambda}})
  + w(E(\gintilde_{m,\bar{\lambda}}) - E(\fstar)) \sqrt{
    \frac{2 \log (1/\delta')}{\mu}
  }
  + \frac{4\log (1/\delta')}{3\mu}.
\end{align}
with probability at least $1 - \delta'$.

Combining the above two inequalities over all $\lambda \in \Lambda$, using the
definition of $\gintilde_{m,\hat\lambda}$ as the minimizer of
$E_{\mu}^v(\cdot)$, and the non-decreasing nature of $w(\cdot)$ gives us the
following with probability at least $1 - (L+1)\delta'$:
\begin{align}
E(\gintilde_{m,\hat\lambda}) - E(\gintilde_{m,\bar{\lambda}}) & \leq
  2 w(E(\gintilde_{m,\hat\lambda}) - E(\fstar)) \sqrt{
    \frac{2 \log (1/\delta')}{\mu}
  }
  + \frac{8\log (1/\delta')}{3\mu}.
  \label{aeq:thm3-1}
\end{align}

Given $\tau^*(\mu)$ as defined in the statement of the claim, $w(\tau^*(\mu)) =
\tau^*(\mu) \sqrt{\mu}$. Now, either $E(\gintilde_{m,\hat\lambda}) - E(\fstar) <
\tau^*(\mu)$. Or $E(\gintilde_{m,\hat\lambda}) - E(\fstar) \geq \tau^*(\mu)$,
which gives us the following:
\begin{align}
\frac{w(E(\gintilde_{m,\hat\lambda}) - E(\fstar))}{\sqrt{E(\gintilde_{m,\hat\lambda}) - E(\fstar)}}
\stackrel{(a)}{\leq} \frac{w(\tau^*(\mu))}{\sqrt{\tau^*(\mu)}}
\stackrel{(b)}{=} \sqrt{\tau^*(\mu)} \sqrt{\mu},
\end{align}
where $(a)$ comes from the assumptions that $w(x)/\sqrt{x}$ is non-increasing in
$x \in \Real_+$ and $(b)$ comes from the definition of $\tau^*(\mu)$. Using the above in \eqref{aeq:thm3-1} gives us:
{
\begin{align}
E(\gintilde_{m,\hat\lambda}) - E(\gintilde_{m,\bar{\lambda}}) & \leq
  2 \sqrt{E(\gintilde_{m,\hat\lambda}) - E(\fstar)} \sqrt{
    \frac{2 \log (1/\delta')}{\mu}
  } \sqrt{\tau^*(\mu)}
  + \frac{8\log (1/\delta')}{3\mu}
  \\
  & \leq
  \frac{\theta}{2} \left( E(\gintilde_{m,\hat\lambda}) - E(\fstar) \right)
  + \frac{8}{2 \theta} \log (1/\delta')\tau^*(\mu)
  + \frac{8\log (1/\delta')}{3\mu}.
\end{align}
}
where we utilize the fact that the arithmetic mean is greater than or equal to
the geometric mean for some $\theta \in (0,1)$.

Then we can get, w.p. $\geq 1 - (L+1) \delta'$:
{
\begin{align}
\left(E(\gintilde_{m,\hat\lambda}) - E(\fstar)\right) (1 - \theta/2) & \leq
  E(\gintilde_{m,\bar{\lambda}}) - E(\fstar)
  + 4 \log(1/\delta') \left(
    \frac{\tau^*(\mu)}{\theta}
    + \frac{2}{3\mu}
  \right).
  \label{aeq:thm3-2}
\end{align}
}
Now we have the following by definition of $\bar{\lambda}$ and $\gintilde_{m,\bar{\lambda}} \in \F_{\bar{\lambda}}$:
{
\begin{align}
E(\gintilde_{m,\bar{\lambda}}) - E(\fstar) 
  & =
  \min_{\lambda \in \Lambda} \left(
    E(\gintilde_{m,\lambda}) - E(\fstar)
  \right)
  \\
  & \stackrel{(c)}{\leq}
  \min_{\lambda \in \Lambda} \left(
    E(\fbar_{\lambda}) - E(\fstar)
    + 2 \sup_{f \in \F_\lambda} \left|
      E(f) - E_{m}(f)
    \right|  + \rhoin
  \right)
  \\
  & \stackrel{(d)}{\leq}
  \min_{\lambda \in \Lambda} \left(
    \eapp(\lambda)
    + 8 \cdot \beta \cdot \mathcal R_n(\F_\lambda)
    + 2 B \cdot \sqrt{\frac{2 \log(1/\delta')}{m}}
    + \rhoin
  \right)
  \label{aeq:thm3-3}
  \quad
  \text{ w.p. } \geq 1 -  L \delta'
\end{align}
}
where $(d)$ is obtained from the application of
Theorem~\ref{athm:max-emp-true-risk-diff-rc} on each $\lambda\in\Lambda$, and
$(c)$ is obtained as follows:
\begin{align}
E(\gintilde_{m,\lambda}) & \leq
  E_{m}(\gintilde_{m,\lambda}) + \sup_{f \in \F_\lambda}|E(f) - E_{m}(f)|
  \\
  & \leq
  E_{m}(\ginhat_{m,\lambda}) + \rhoin
  + \sup_{f \in \F_\lambda}|E(f) - E_{m}(f)|
  \\
  & \leq
  E_{m}(\fbar_{\lambda}) + \rhoin
  + \sup_{f \in \F_\lambda}|E(f) - E_{m}(f)|
  \\
  & \leq
  E(\fbar_{\lambda}) + \rhoin
  + 2 \sup_{f \in \F_\lambda}|E(f) - E_{m}(f)|.
\end{align}

Combining \eqref{aeq:thm3-2} and \eqref{aeq:thm3-3}, setting $\delta' = \delta /
(2L + 1)$ and noting that $(1 - \theta/2)^{-1} \leq (1+\theta)$ for $\theta \in
(0,1)$ gives us the statement of the claim.
\end{proof}

\section{Data-driven heuristic for $\rhoout$} \label{asec:heu-rhoout}
While we have a very precise way of setting $\rhoin$ given the theoretical
result, the choice of $\rhoout$ is somewhat more involved. Based on
Theorem~\ref{thm:ehpo-bnd-md}, we can utilize the following heuristic to set
$\rhoout$:
\begin{heuristic}\label{heu:data-dep-rhoout}
Assuming that $\rhoout$ can be iteratively reduced during the approximate ERM
for $\ftilde_{n,\hat\lambda}$, based on the terms in
Theorem~\ref{thm:ehpo-bnd-md} and a given $\rhoin$, we propose the following
iterative scheme to set $\rhoout$ with scaling parameters $\nu \in (0,1), \gamma
> 0$: For $T > 0$, we iteratively reduce $\rhoout$ as
\begin{equation}\label{eq:data-dep-rhoout}
\begin{split}
  & \rhoout^{(T+1)} \gets \nu \cdot \rhoout^{(T)}
  \quad \text{ if } \ \
  \Gamma(\rhoout^{(T-1)}) - \Gamma(\rhoout^{(T)})
  > \gamma \cdot \kappa \\
  & \text{exit approx. ERM} \quad \text{ otherwise},
\end{split}
\end{equation}
where $\Gamma(\rhoout) := \mathcal{I}_{n,m}^{\hat\lambda}(\rhoin, \rhoout)$,
$\rhoout^{(0)} \gets \rhoin$, and $\kappa := \rhoin +
B\sqrt{2\log(\nicefrac{2(L+2)}{\delta})}(\nicefrac{2}{\sqrt{n}} +
\nicefrac{2}{\sqrt{m}} + \nicefrac{1}{\sqrt{\mu}})$.
\end{heuristic}
This heuristic leverages the fact that the empirical risk improvement
$\mathcal{I}_{n,m}^{\hat\lambda}(\rhoin,\rhoout)$ will increase as $\rhoout$ is
reduced up until a point, and the excess risk of $\ftilde_{n,\hat\lambda}$ is
closely tied to this empirical risk improvement -- more improvement implies
better excess risk. Heuristic~\ref{heu:data-dep-rhoout} tries to balance any
increase in this empirical risk improvement with the other (computable) terms,
denoted as $\kappa$, in the excess risk bound in \eqref{eq:excess-risk-bnd-md}
-- we stop reducing $\rhoout$ when the increase in the empirical risk
improvement $\Gamma(\rhoout^{(T-1)}) - \Gamma(\rhoout^{(T)})$ is an order of
magnitude below $\kappa$, at which point, other terms dominate the excess
risk. Heuristics~\ref{heu:data-dep-rhoin} and \ref{heu:data-dep-rhoout} are our
answers to \hyperlink{q2b}{\textsf{Q2b}}.

Heuristic~\ref{heu:data-dep-rhoout} just presents a way to identify when
$\rhoout$ is sufficiently small in terms of statistical performance while being
able to gain computationally when we are able to approximate the ERM in an
iterative manner and progressively decrease $\rhoout$.
\begin{figure}[ht]
  \centering
  \begin{subfigure}{0.45\textwidth}
    \includegraphics[width=\textwidth]{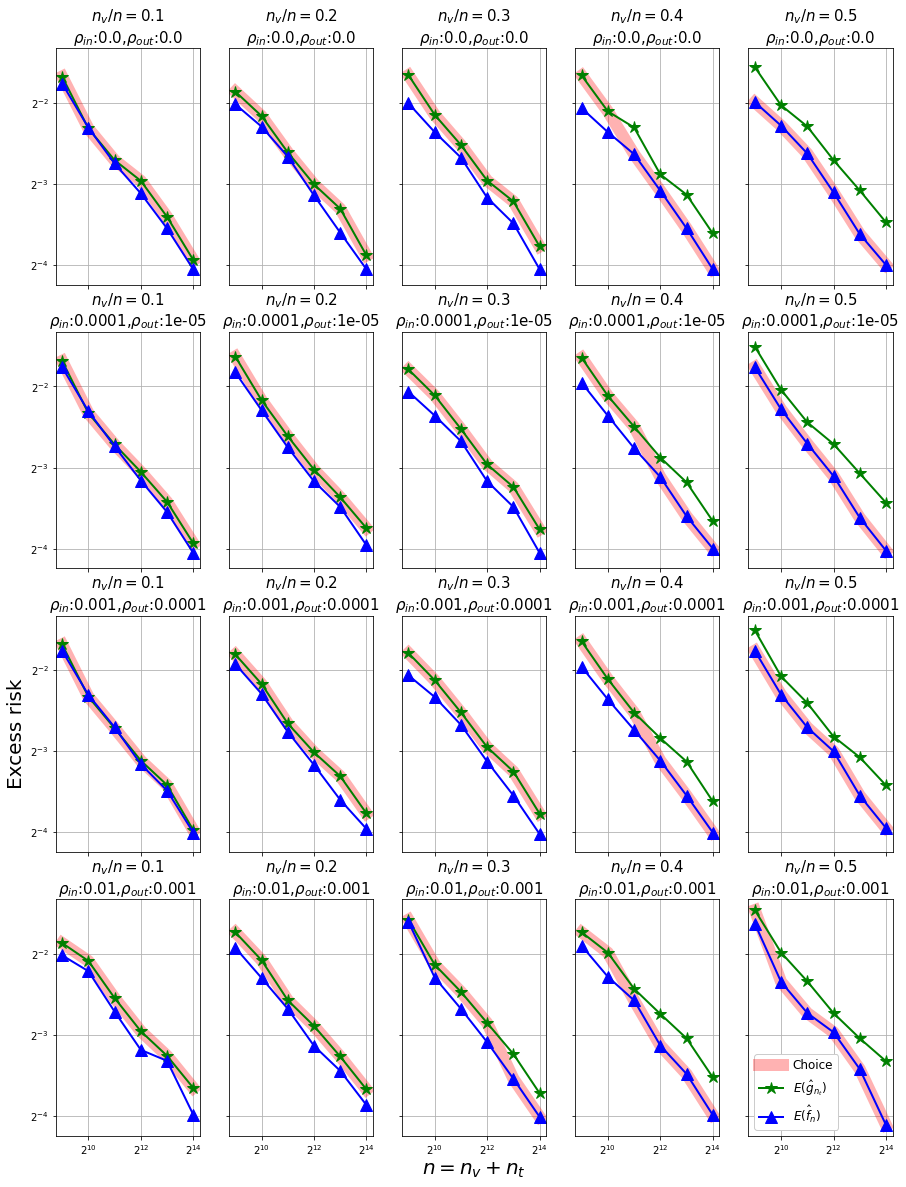}
    \caption{$\rhoout = \nicefrac{\rhoin}{10}$}
    \label{afig:h1-hpo1-1}
  \end{subfigure}
  \begin{subfigure}{0.45\textwidth}
    \includegraphics[width=\textwidth]{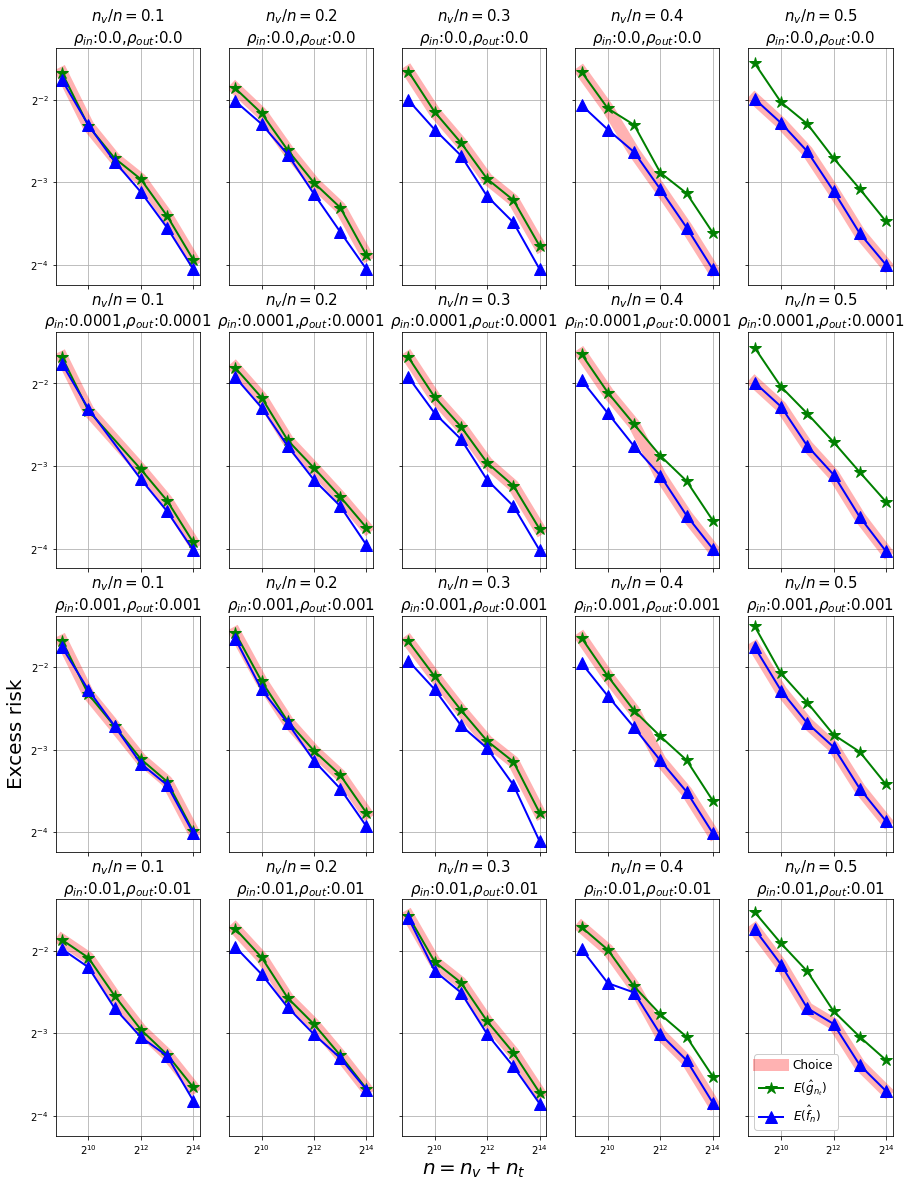}
    \caption{$\rhoout = \rhoin$}
    \label{afig:h1-hpo1-3}
  \end{subfigure}
  \begin{subfigure}{0.45\textwidth}
    \includegraphics[width=\textwidth]{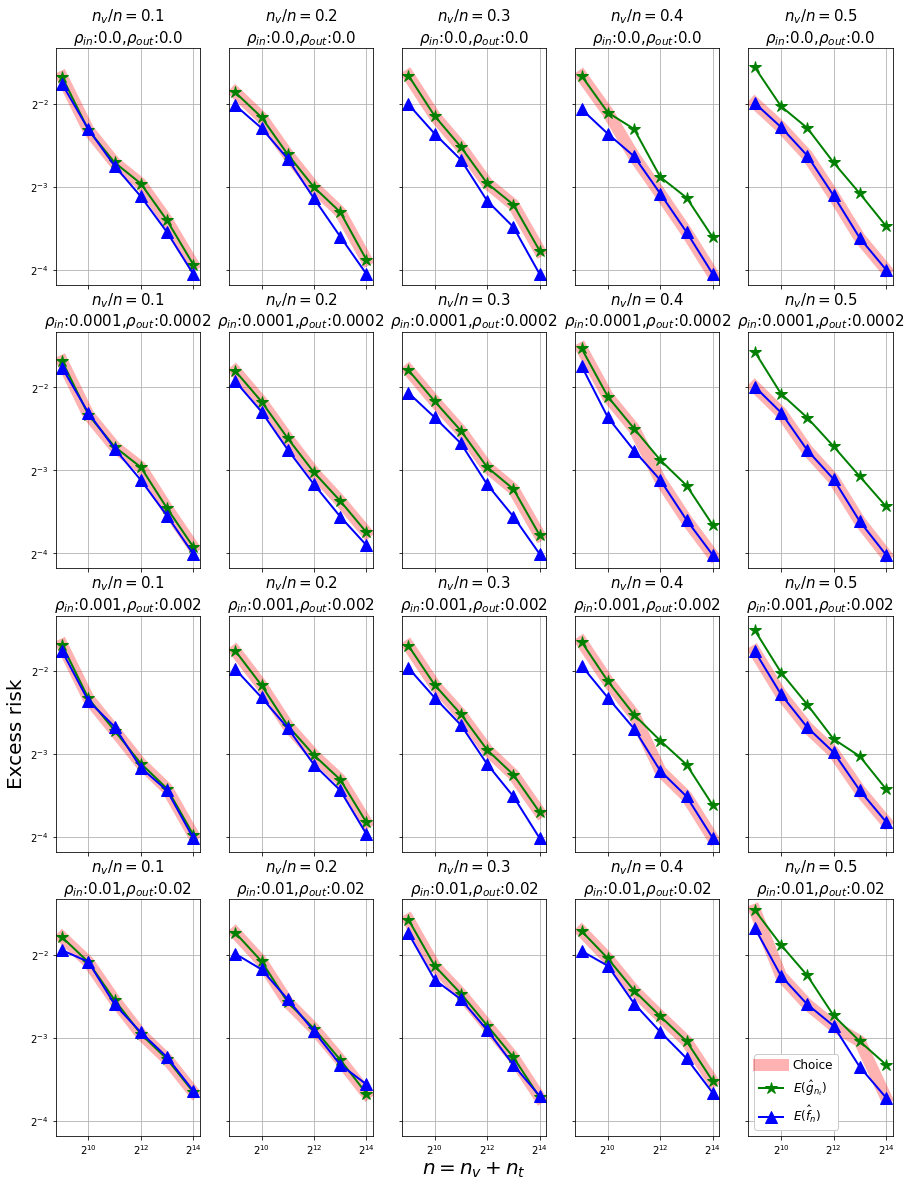}
    \caption{$\rhoout = 2 \cdot \rhoin$}
    \label{afig:h1-hpo1-4}
  \end{subfigure}
  \begin{subfigure}{0.45\textwidth}
    \includegraphics[width=\textwidth]{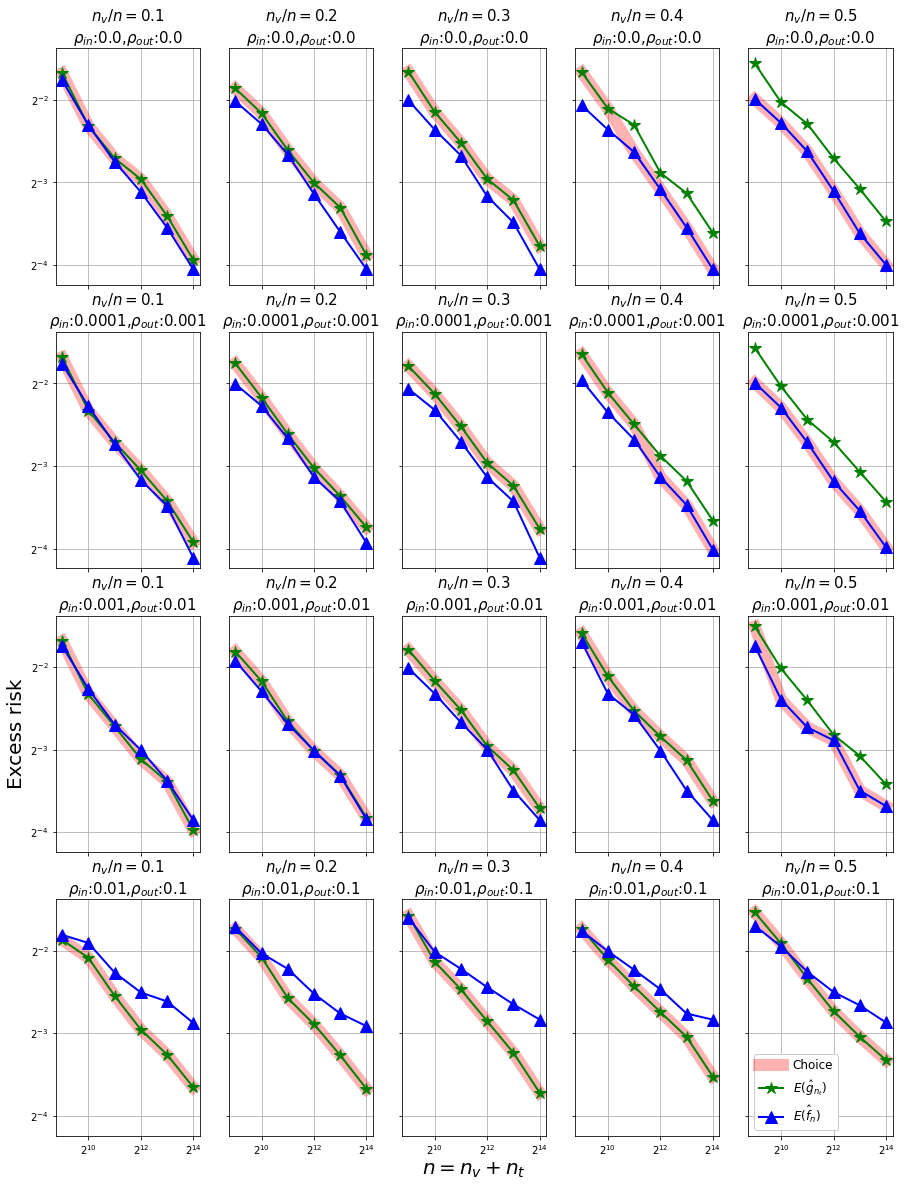}
    \caption{$\rhoout = 10 \cdot \rhoin$}
    \label{afig:h1-hpo1-5}
  \end{subfigure}
\caption{Excess-risk of data-dependent choice between $\ftilde_{n,\hat\lambda}$
  and $\gintilde_{n,\hat\lambda}$ based on Heuristic~\ref{heu:md-vs-nmd} for
  $\rhoin \in \{0.0001, 0.001, 0.01, 0.1\}$ and search space with $|\Lambda| =
  36$. Please magnify to view in detail.}
\label{afig:h1-hpo1}
\end{figure}
\section{Empirical evaluation details} \label{asec:emp}
\paragraph{Synthetic data generation.}
The binary classification data is generated using the
\texttt{make\_classification} function~\citep{guyon2003design} in
\texttt{scikit-learn}~\citep{pedregosa2011scikit}. We ensure that the classes are
not overlapping and there is no label noise, ensuring that the Bayes optimal
risk $E(\fstar) = 0$.
\paragraph{Neural network HP search space.}
We consider two different $\Lambda$s for a fully connected neural network, with
(i)~depth $\in \{1, 2, 3\}$, (ii)~number of neurons in each layer $\in \{10,
100\}$, (iii)~initial SGD learning rate $\in \{0.01, 0.1\}$, (iv)~SGD batch size
$\in \{8, 32, 128\}$. The implementation is in
\texttt{PyTorch}~\citep{paszke2019pytorch}. For one problem we consider 36
configurations (that is $L = |\Lambda| = 36$), and for another, we consider 18
configurations (that is $L = |\Lambda| = 18$).
\paragraph{LightGBM HP search space.}
We use the following search space for the{\tt LGBMClassifier} from
LightGBM~\citep{ke2017lightgbm} with {\tt RandomizedSearchCV} from {\tt
  scikit-learn}:
%
(i)~learning rate $\in [0.01,0.4]$, (ii)~number of trees $\in [100,5000]$,
(iii)~number of leaves per tree $\in [6,50]$, (iv)~minimum samples in a child
node $\in [100,500]$, (v)~minimum weight in a child node $\in [10^{-5},10^4]$,
(vi)~sub-sampling rate $\in [.1,.9]$, (vii)~maximum depth per tree $\in [-1,7]$,
(viii)~column sub-sampling rate per tree $\in [.4,.7]$, (ix)~$\alpha$
regularization $\in[0,100]$, (x)~$\lambda$ regularization $\in [0,100]$.
\paragraph{OpenML data.}
The data sets used in this experiment are listed in Table~\ref{atab:openml-list}
with their OpenML names and IDs.  For each data set, we utilize 3 different
values of the train-validation split ratio $\nicefrac{\mu}{n} \in \{0.1, 0.2,
0.3\}$.
\begin{table}[t]
\caption{Data set name \& OpenML ID of all the data sets used with 10 data sets
  each in the data size range.}
\label{atab:openml-list}
\begin{center}
{
\begin{tabular}{llll}
\toprule
\# samples & Name (OpenML ID) &  \\
\midrule
1000-4999  & kr-vs-kp (3)  & credit-g (31) \\
 & sick (38)  & spambase (44) \\
 & scene (312)  & yeast-ml8 (316) \\
 & fri-c3-1000-25 (715)  & fri-c4-1000-100 (718) \\
 & abalone (720)  & fri-c4-1000-25 (723) \\
\midrule
5000-9999  & mushroom (24)  & bank8FM (725) \\
 & cpu-small (735)  & puma32H (752) \\
 & cpu-act (761)  & delta-ailerons (803) \\
 & kin8nm (807)  & puma8NH (816) \\
 & delta-elevators (819)  & bank32nh (833) \\
\midrule
10000-49999  & BNG(tic-tac-toe) (137)  & electricity (151) \\
 & adult (179)  & BNG(breast-w) (251) \\
 & mammography (310)  & webdata-wXa (350) \\
 & pol (722)  & 2dplanes (727) \\
 & ailerons (734)  & house-16H (821) \\
\midrule
50000-99999  & vehicle-sensIT (357)  & KDDCup09-app (1111) \\
 & KDDCup09-churn (1112)  & KDDCup09-upsell (1114) \\
 & vehicleNorm (1242)  & higgs (23512) \\
 & numerai28.6 (23517)  & Run-or-walk-info (40922) \\
 & APSFailure (41138)  & kick (41162) \\
\bottomrule
\end{tabular}
}
\end{center}
\end{table}
\section{Extended empirical evaluation}\label{asec:eval}
\subsection{Further evaluation of Heuristic~\ref{heu:md-vs-nmd}}
\label{asec:eval:md-v-nmd}
In Figure~\ref{fig:choice}, we demonstrated the empirical utiity of
Heuristic~\ref{heu:md-vs-nmd} for a particular choice of $\rhoin = \rhoout = 0$
implying we leverage exact ERM in the inner level of the HPO problem (to obtain
$\ginhat_{m,\lambda},\lambda \in \Lambda$) and in the final training of the
model on the selected HP $\hat\lambda$ to obtain $\fhat_{n,\hat\lambda} \in
\F_{\hat\lambda}$. In this subsection, we evaluate Heuristic~\ref{heu:md-vs-nmd}
for other pre-set values of $\rhoin$ and $\rhoout$. We start with trying $\rhoin
\in \{0.0001, 0.001, 0.01, 0.1\}$ and then setting $\rhoout$ relative to
$\rhoin$.

We present the following results for the search space with 36 configuration in
Figure~\ref{afig:h1-hpo1}: (a)~Figure~\ref{afig:h1-hpo1-1} for $\rhoout =
\nicefrac{\rhoin}{10}$, (b)~Figure~\ref{afig:h1-hpo1-3} for $\rhoout = \rhoin$,
(c)~Figure~\ref{afig:h1-hpo1-4} for $\rhoout = 2 \cdot \rhoin$,
(d)~Figure~\ref{afig:h1-hpo1-5} for $\rhoout = 10\rhoin$.
We present the following results for the search space with 18 configuration in
Figure~\ref{afig:h1-hpo2}: (a)~Figure~\ref{afig:h1-hpo2-1} for $\rhoout =
\nicefrac{\rhoin}{10}$, (b)~Figure~\ref{afig:h1-hpo2-3} for $\rhoout = \rhoin$,
(c)~Figure~\ref{afig:h1-hpo2-4} for $\rhoout = 2 \cdot \rhoin$,
(d)~Figure~\ref{afig:h1-hpo2-5} for $\rhoout = 10\rhoin$.

For the cases where $\rhoout \leq \rhoin$ (Figures~\ref{afig:h1-hpo1-1} and
\ref{afig:h1-hpo1-3}), the excess risk of $\ftilde_{n,\hat\lambda}$ is usually
better than the excess risk of $\gintilde_{m,\hat\lambda}$, and
Heuristic~\ref{heu:md-vs-nmd}'s ``Choice'' makes the right choice when there is
significant difference between the performance of the two candidates. For the
cases where $\rhoout > \rhoin$ (Figures~\ref{afig:h1-hpo1-3} and
\ref{afig:h1-hpo1-4}), there are some situations where
$\gintilde_{m,\hat\lambda}$ has a (significantly) better excess risk over
$\ftilde_{n,\hat\lambda}$. In these cases the Heuristic~\ref{heu:md-vs-nmd}
``Choice'' is able to make the right choice -- see for example the last row in
Figure~\ref{afig:h1-hpo1-4}.
\begin{figure}[htb]
  \centering
  \begin{subfigure}{0.45\textwidth}
    \includegraphics[width=\textwidth]{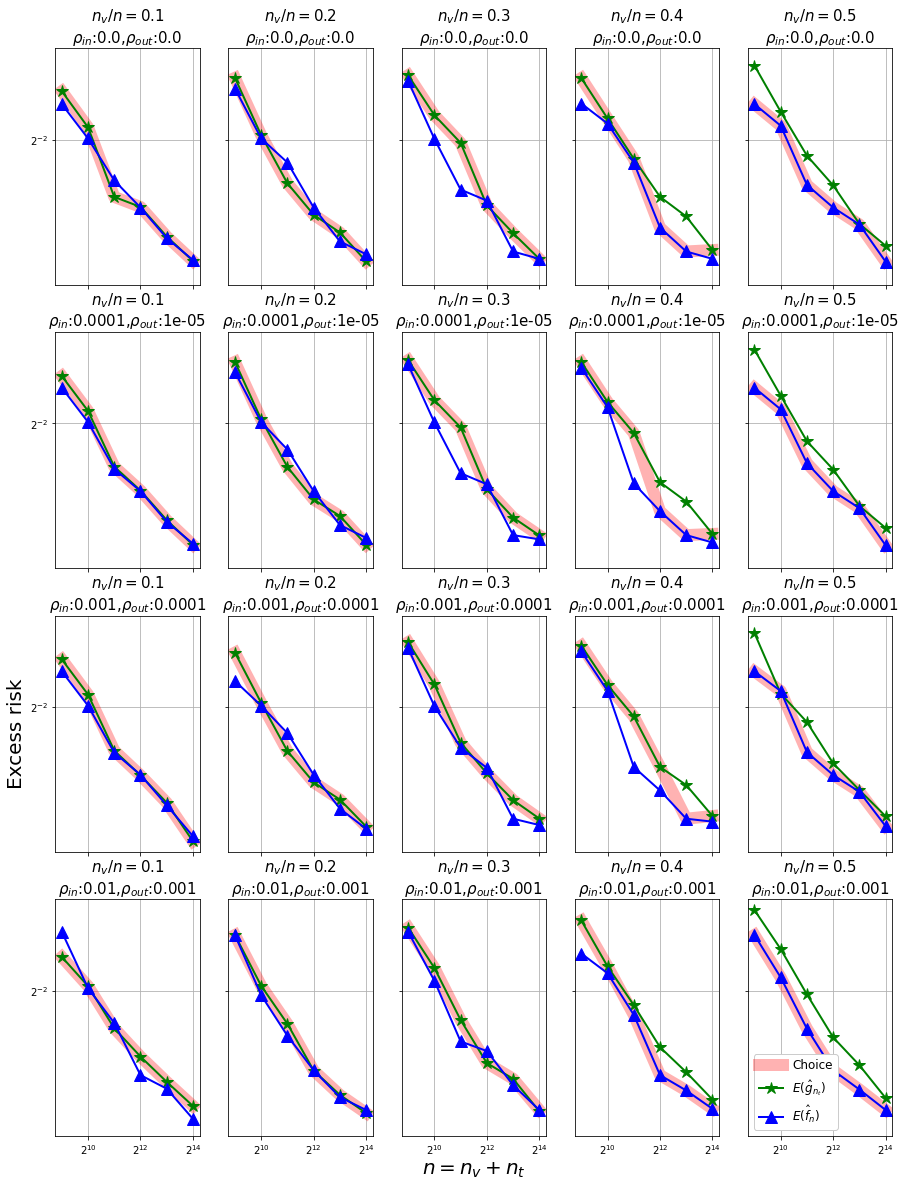}
    \caption{$\rhoout = \nicefrac{\rhoin}{10}$}
    \label{afig:h1-hpo2-1}
  \end{subfigure}
  \begin{subfigure}{0.45\textwidth}
    \includegraphics[width=\textwidth]{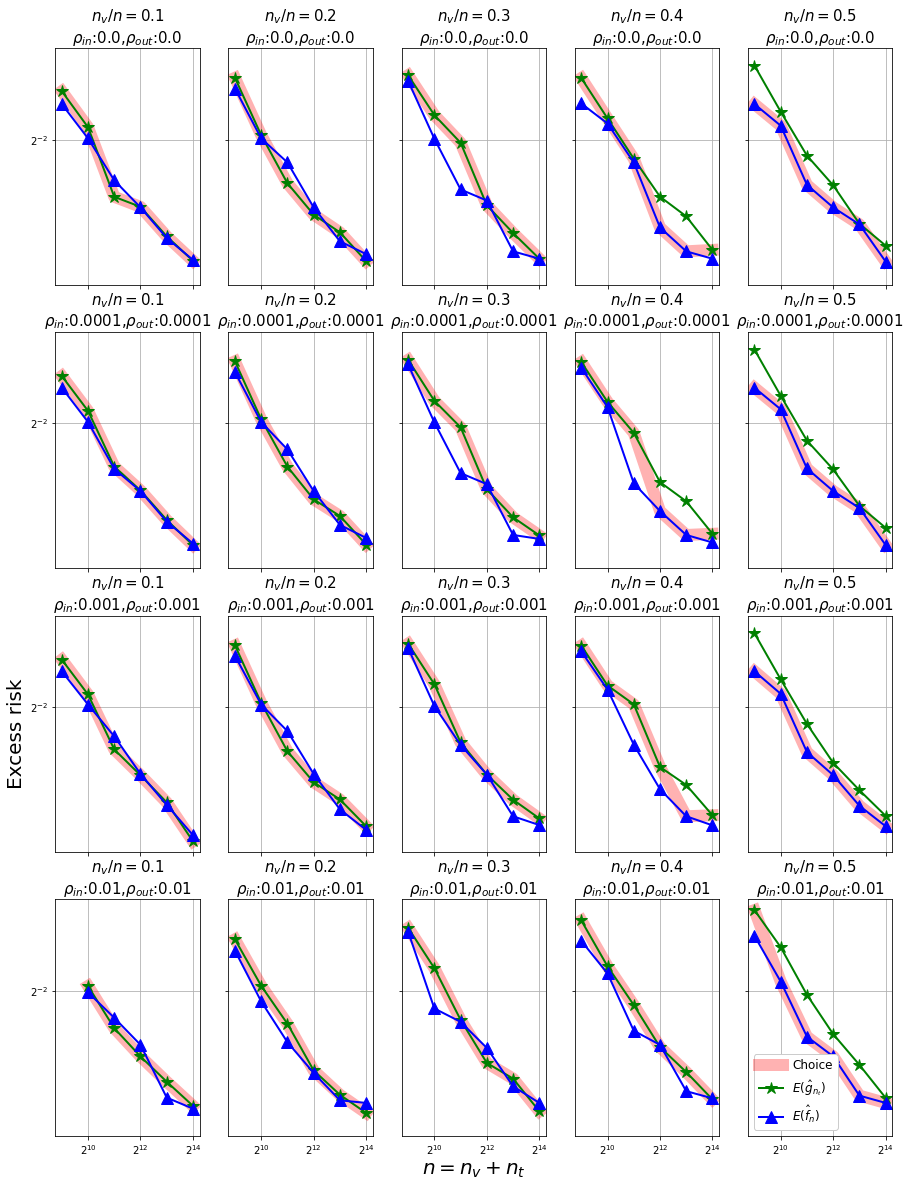}
    \caption{$\rhoout = \rhoin$}
    \label{afig:h1-hpo2-3}
  \end{subfigure}
  \begin{subfigure}{0.45\textwidth}
    \includegraphics[width=\textwidth]{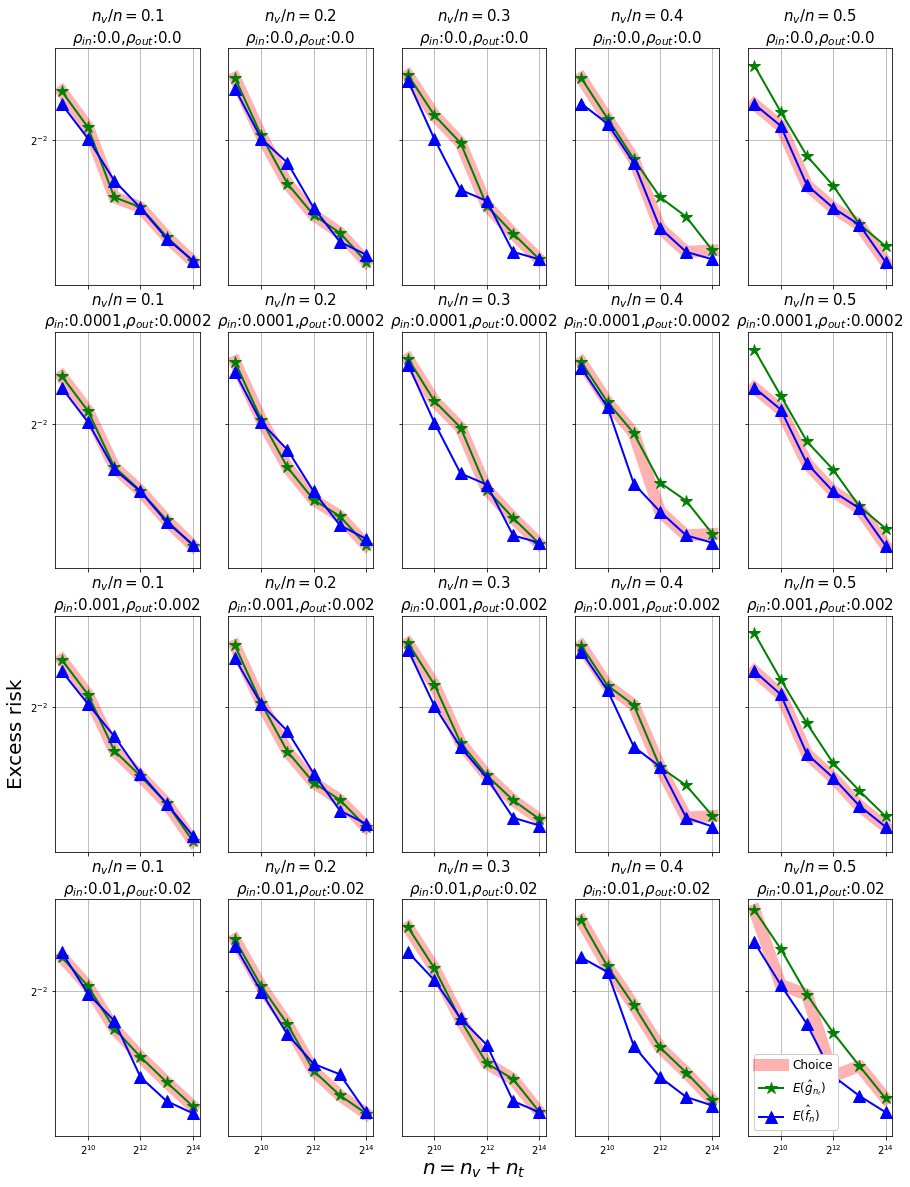}
    \caption{$\rhoout = 2 \cdot \rhoin$}
    \label{afig:h1-hpo2-4}
  \end{subfigure}
  \begin{subfigure}{0.45\textwidth}
    \includegraphics[width=\textwidth]{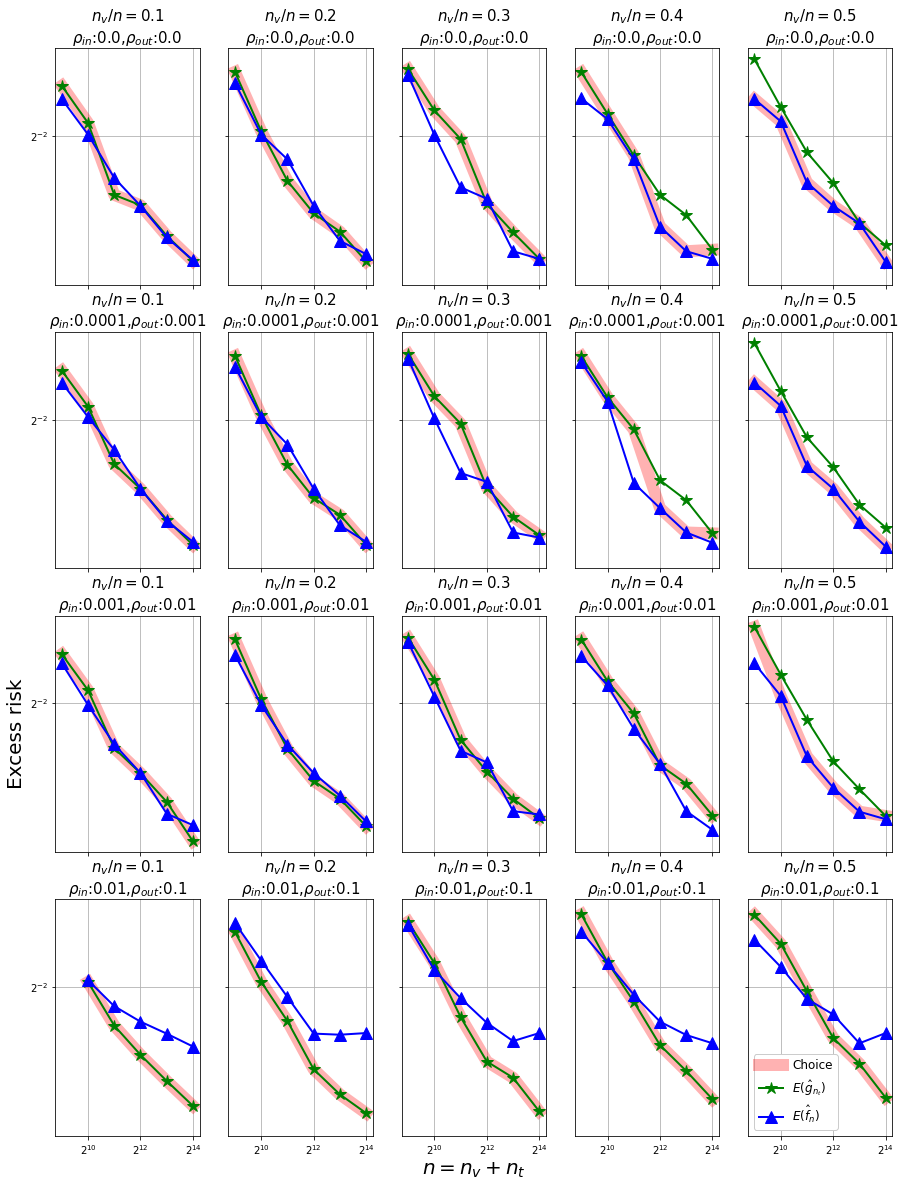}
    \caption{$\rhoout = 10 \cdot \rhoin$}
    \label{afig:h1-hpo2-5}
  \end{subfigure}
\caption{Excess-risk of data-dependent choice between $\ftilde_{n,\hat\lambda}$
  and $\gintilde_{n,\hat\lambda}$ based on Heuristic~\ref{heu:md-vs-nmd} for
  $\rhoin \in \{0.0001, 0.001, 0.01, 0.1\}$ and search space with $|\Lambda| =
  18$. Please magnify to view in detail.}
\label{afig:h1-hpo2}
\end{figure}

\subsection{Evaluation of Heuristic~\ref{heu:data-dep-rhoout}}
\label{asec:eval:rhoout}
\begin{figure}[htb]
\centering
\begin{subfigure}{0.9\textwidth}
  \centering
  \includegraphics[width=\textwidth]{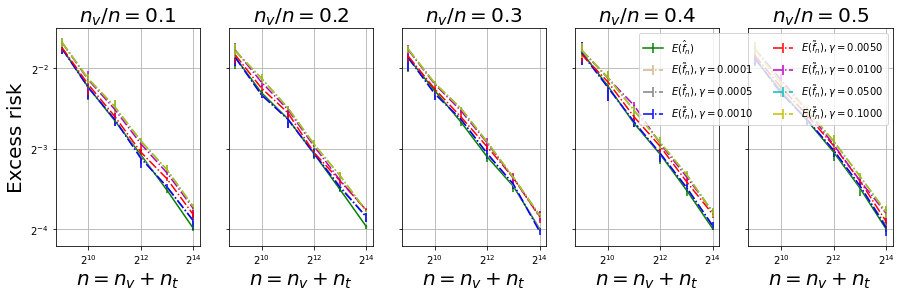}
  \caption{Excess-risk of $\ftilde$ vs. $\fhat$.}
  \label{afig:h3-er}
\end{subfigure}
\begin{subfigure}{0.9\textwidth}
  \centering
  \includegraphics[width=\textwidth]{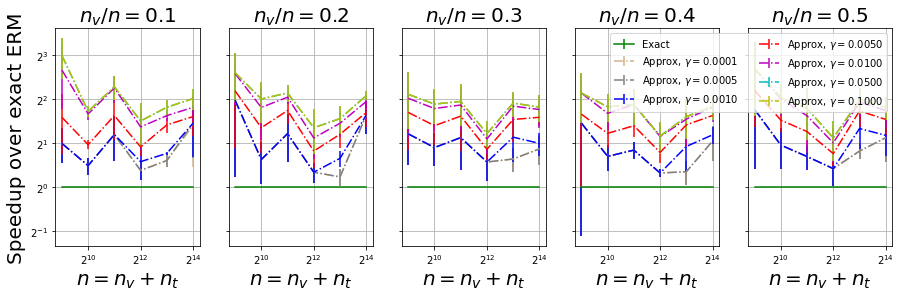}
  \caption{Speedup over exact ERM}
  \label{afig:h3-speedup}
\end{subfigure}
\caption{Empirical validation of the data-dependent choice of $\rhoout$ in
  Heuristic~\ref{heu:data-dep-rhoout}. Please magnify to view in detail.}
\label{afig:h3}
\end{figure}
Leveraging the data-dependent selection of $\rhoout$ with
Heuristic~\ref{heu:data-dep-rhoout}, we evaluate the excess risk incurred by
approximating the ERM over $\F_{\hat\lambda}$ with all $n$ samples to $\rhoout$
tolerance. We present the results for the different values of $\gamma$ in
Heuristic~\ref{heu:data-dep-rhoout} from the set $\{0.001, 0.005, 0.001, 0.005,
0.01, 0.05, 0.01\}$ in Figure~\ref{afig:h3}. The excess risks incurred and the
speedups gained from using $\ftilde_{n,\hat\lambda}$ inplace of
$\fhat_{n,\hat\lambda}$ is visualized in Figure~\ref{afig:h3} -- the solid line
corresponds to $\fhat_{n,\hat\lambda}$ while the dash-dotted lines correspond to
$\ftilde_{n,\hat\lambda}$ for different values of $\gamma$. And the results
corresponding to the excess risk in Figure~\ref{afig:h3-er} indicate that, for
$\gamma$ up to $0.005$, the increase in excess-risk is quite small. The speedups
obtained for the different choices of $\gamma$ and corresponding data-dependent
$\rhoout$ in Figure~\ref{afig:h3-speedup}. It can be seen that we can get up to
$2 \times$ speedup over exact ERM without losing much in terms of the excess
risk (see for $\gamma$ up to $0.005$); for larger values of $\gamma$ we can get
up to $4\times$ speedup if we are ready to incur some additional excess risk.
\end{document}